\newcommand{\vc}[1]{\vec{#1}}
\newcommand{\csigma}{\Sigma_2^P}
\newcommand{\cpi}{\Pi_2^P}
\journal{Artificial Intelligence Journal}
\begin{document}

\begin{frontmatter}

\title{Complexity Bounds for the Controllability of Temporal Networks with Conditions, Disjunctions, and Uncertainty}
\author[mit]{Nikhil Bhargava}
\ead{nkb@mit.edu}
\author[mit]{Brian Williams}
\ead{williams@mit.edu}

\address[mit]{Massachusetts Institute of Technology\\
Room 32-227\\
32 Vassar St.\\
Cambridge, MA 02139 USA}


\newtheorem{theorem}{Theorem}
\newtheorem{corollary}{Corollary}[theorem]
\newtheorem{lemma}[theorem]{Lemma}

\theoremstyle{definition}
\newtheorem{defn}{Definition}

\theoremstyle{definition}
\newtheorem{example}{Example}

\begin{abstract}

In temporal planning, many different temporal network formalisms are used to model real world situations. Each of these formalisms has different features which affect how easy it is to determine whether the underlying network of temporal constraints is consistent. While many of the simpler models have been well-studied from a computational complexity perspective, the algorithms developed for advanced models which combine features have very loose complexity bounds. In this paper, we provide tight completeness bounds for strong, weak, and dynamic controllability checking of temporal networks that have conditions, disjunctions, and temporal uncertainty. Our work exposes some of the subtle differences between these different structures and, remarkably, establishes a guarantee that all of these problems are computable in PSPACE.

\end{abstract}

\begin{keyword}
Temporal Planning \sep Temporal Uncertainty
\end{keyword}

\end{frontmatter}


\section{Introduction}

In temporal planning, many different temporal formalisms are used to model real world situations. The choice of any particular type of network in modeling a problem has inherent trade-offs. If a temporal model supports more features, it can model a given scenario with higher fidelity. However, the additional features come at the expense of performance; modelers care about constructing schedules for temporal networks, and the presence of additional feature types can dramatically slow the runtime of scheduling algorithms.

The computational complexities of many of the simpler temporal models have been well-studied, but the same cannot be said of more advanced models. Despite this gap, there has been considerable effort put into constructing improved algorithms for these feature-rich temporal networks \cite{cimatti:cdstnu,cimatti:execution-language,combi:cstnu,hunsberger:cstnu-execution,venable:dtnu-weak}.

The main contribution of this paper is in providing significantly improved theoretical complexity bounds for computing the controllability of temporal networks with conditions, disjunctions, and temporal uncertainty. The existing bounds for some of these results have been quite loose with most decision problems not known to be better than EXPTIME and some not known to be better than EXPSPACE. We provide completeness results for the strong, weak, and dynamic controllability decision problems across these networks and remarkably prove that all of these problems can be solved in PSPACE. Our results are summarized in Figure \ref{fig:taxonomy}. We conclude with a discussion of our results, giving practical advice to modelers who are interested in the trade-offs of using different temporal networks and lending insight into the differences between these networks.

\begin{figure*}[!tb]
\centering
\includegraphics[width=1.0\textwidth]{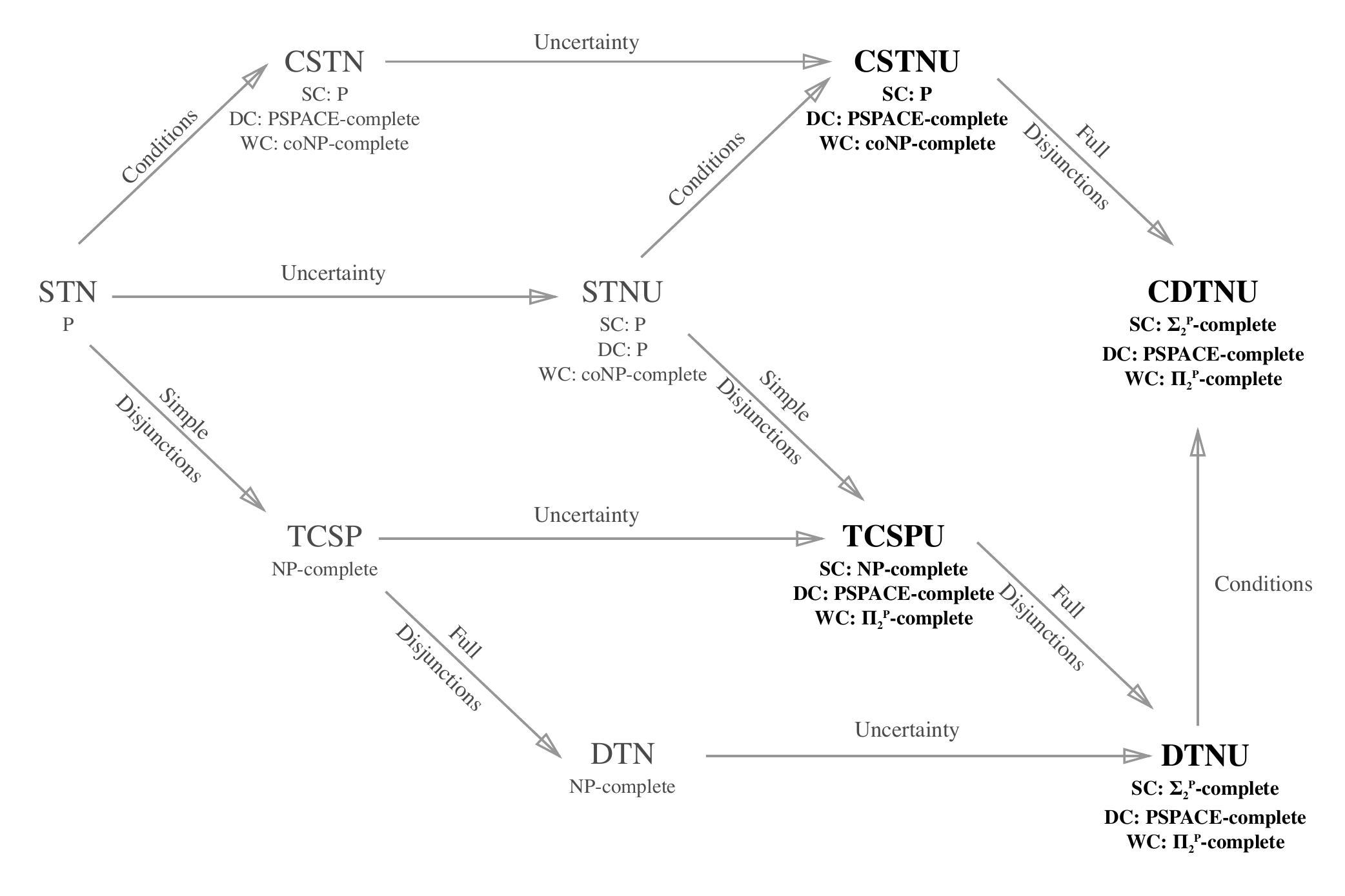}

\caption{A taxonomic organization of temporal networks considered in this paper, how they relate to one another, and the complexity classes to which their decision problems belong. SC, DC, and WC represent strong controllability, dynamic controllability, and weak controllability, respectively. Results in bold represent novel results proved in this paper.}
\label{fig:taxonomy}
\end{figure*}

There are many types of temporal networks beyond those that we focus on in this paper. Many include features related to actor decisions, such as Temporal Plan Networks \cite{kim:tpn}, Temporal Plan Networks with Uncertainty \cite{karpas:tpnu}, Controllable Conditional Temporal Problems \cite{yu:cctp}, Conditional Simple Temporal Networks with Decisions \cite{cairo:cstnd}, and Conditional Simple Temporal Networks with Uncertainty and Decisions \cite{zavatteri:cstnud} while others, such as Probabilistic Simple Temporal Networks \cite{fang:ccpstn} and their relevant extensions, consider probabilistic temporal bounds. Despite the existence of other networks our work covers a broad area of focus that is under active investigation. Future work in this direction will focus on characterizing, organizing, and providing tighter bounds for controllability in these other types of networks but is outside the scope of this work.

\section{Background}

In this section, we will introduce the set of temporal networks whose controllability we will analyze in depth, as well as their simpler base counterparts. The set of networks that we focus on is summarized in Figure \ref{fig:taxonomy}. We divide the discussion of temporal networks into that of base temporal networks, which build on the simplest temporal network representations, and compositional temporal networks, which make use of two or more features in their representation. After describing the temporal networks in detail, we will introduce the complexity classes that make up the polynomial-time hierarchy, as they will be useful in categorizing the complexity of particular controllability classes.

\subsection{Base Temporal Networks}

\subsubsection{Simple Networks}

Simple Temporal Networks (STNs) are the most basic temporal network on which all other formalisms are built \cite{dechter:temporal}. STNs are composed of a set of variables and a set of binary constraints limiting the difference between any two variables (e.g. $B - A \in [10, 20]$). These variables denote individual points in time (henceforth \textit{timepoints}) and the constraints between them are binary temporal constraints limiting their temporal difference (e.g. event $A$ must happen between 10 and 20 minutes before event $B$).

\begin{figure*}[!tb]
\centering
\includegraphics[width=1.0\textwidth]{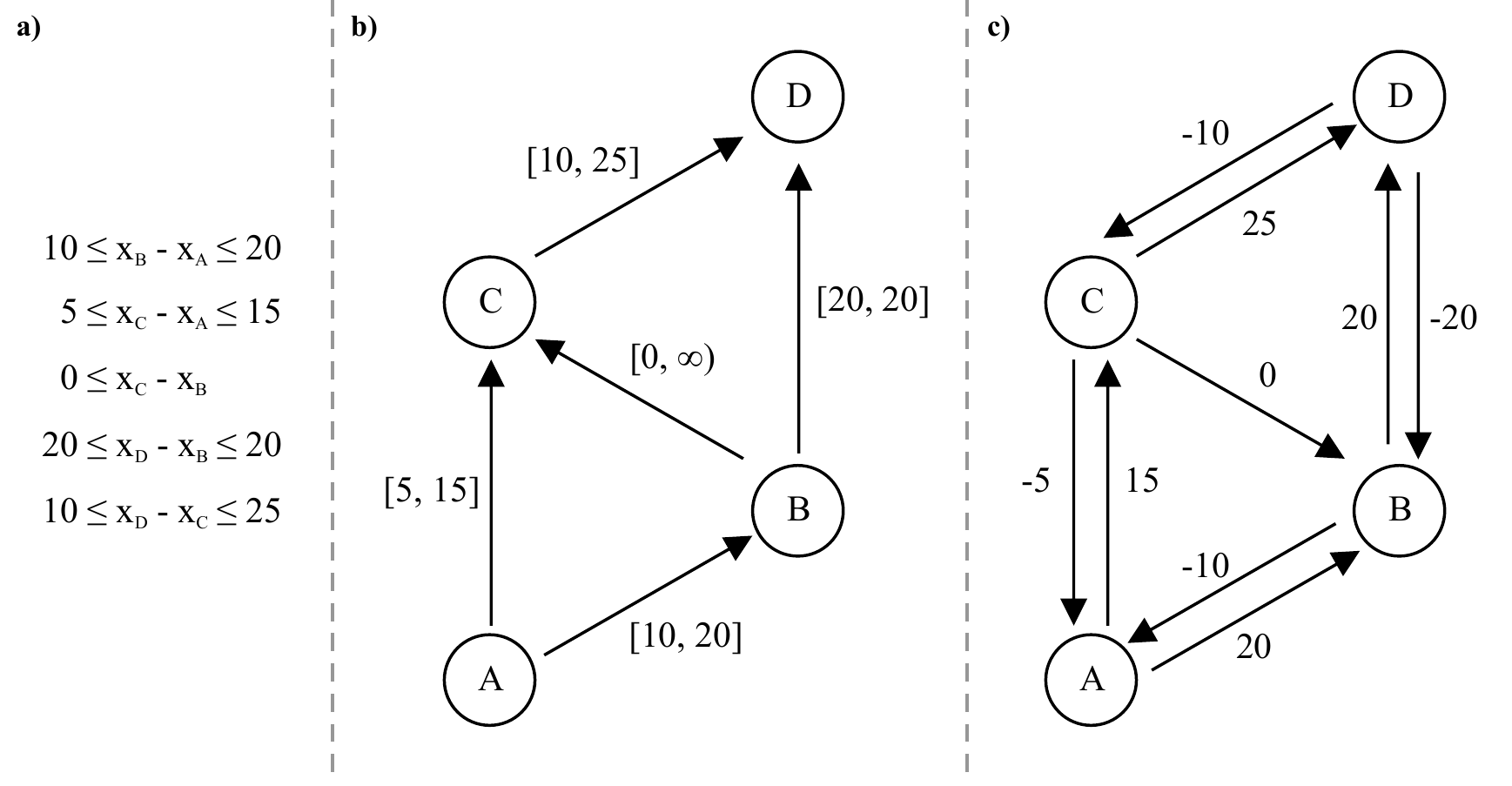}

\caption{(a) An STN as specified by its set of constraints. (b) The same STN represented graphically. (c) The same STN represented using its distance graph formulation.}
\label{fig:stn}
\end{figure*}

\begin{defn}
\textbf{STN \cite{dechter:temporal}}\\
An STN is a 2-tuple $\langle X, R \rangle$ where:
\begin{itemize}
    \item $X$ is a set of timepoint variables composing the temporal network
    \item $R$ is a set of constraints of the form $l_r \leq x_r - y_r \leq u_r$, where $x_r, y_r \in X$ and $l_r, u_r \in \mathbb{R}$
\end{itemize}
\end{defn}

When we consider the feasibility of an STN, we are generally concerned with whether it is possible to construct a \textit{schedule}, or an assignment of values from $\mathbb{R}$ to each variable $x \in X$, such that all constraints are satisfied.

Typically, temporal networks are represented graphically where each timepoint is represented as a node and edges between nodes represent the range of possible differences in value between the endpoints as specified by the original constraints (see Figure \ref{fig:stn}b). We can similarly represent temporal networks by their \textit{distance graphs} \cite{dechter:temporal} in which case we get a simple graph in which each edge from A to B with weight $w$ represents the constraint $x_B - x_A \leq w$, which can be extracted from the original constraints of the temporal network (see Figure \ref{fig:stn}c). In its distance graph formulation, determining the feasibility of an STN reduces to finding a negative cycle in the graph. In general, we take $n$ to be the number of timepoints in a temporal networks and $m$ to be the number of constraints. In the case of STNs, determining feasibility takes at most $O(mn)$ time \cite{dechter:temporal}. When describing modifications made by other temporal networks, we will discuss how these changes affect the cost of checking schedule feasibility.

\subsubsection{Disjunctive Networks}

The first modification we make to STNs is to allow for disjunctions over temporal constraints. In practice, we frequently construct and consider schedules with disjunctive constraints; during a trip to the beach, we know that we want to eat lunch either 30 minutes before swimming or immediately afterwards -- not at any moment in between.

The two types of disjunctive networks that are used in practice, Temporal Constraint Satisfaction Problems (TCSPs) and Disjunctive Temporal Networks (DTNs), differ in terms of the types of disjunctive constraints that they admit \cite{dechter:temporal,stergiou:dtn}.

\begin{defn}
\textbf{TCSP \cite{dechter:temporal}}\\
An TCSP is a 2-tuple $\langle X, R \rangle$ where:
\begin{itemize}
    \item $X$ is a set of timepoint variables composing the temporal network
    \item $R$ is a set of simple disjunctive constraints of the form $x_r - y_r \in \bigcup\limits_k [l_{r, k}, u_{r,k}]$, where $x_r, y_r \in X$ and $l_{r,k}, u_{r, k} \in \mathbb{R}$
\end{itemize}
\end{defn}

\begin{defn}
\textbf{DTN \cite{stergiou:dtn}}\\
An DTN is a 2-tuple $\langle X, R \rangle$ where:
\begin{itemize}
    \item $X$ is a set of timepoint variables composing the temporal network
    \item $R$ is a set of full disjunctive constraints of the form $\bigvee\limits_k \left(l_{r,k} \leq x_{r,k} - y_{r,k} \leq u_{r,k}\right)$, where $x_{r,k}, y_{r,k} \in X$ and $l_{r,k}, u_{r,k} \in \mathbb{R}$
\end{itemize}
\end{defn}

The disjunctive constraints of TCSPs require that every constraint in a given disjunction relates the same pair of timepoints. In contrast, DTNs allow disjunctive constraints to be a disjunction over any constraints that might be found in an STN. In this paper, we will refer to the type of disjunctions allowed by TCSPs as \textit{simple disjunctions} and the type of disjunctions allowed by DTNs as \textit{full disjunctions}. Checking the feasibility of both TCSPs and DTNs is known to be NP-complete \cite{dechter:temporal,stergiou:dtn}. It is worth noting that a linear time transformation exists that converts DTNs into equivalent TCSPs \cite{planken:tcsp-to-dtn}, but maintaining the distinction between the two is important because, remarkably, as we extend the two types of networks, we see that the computational complexity of solving them begins to diverge.

\subsubsection{Conditional Networks}

The Conditional Simple Temporal Network (CSTN) represents a different way in which we can augment STNs \cite{tsamardinos:cstn}. CSTNs allow for the introduction and observation of uncontrollable events and the conditional enforcement of constraints based on the observations of those events.

\begin{defn}
\textbf{CSTN \cite{tsamardinos:cstn}}\\
A CSTN is a tuple $\langle X, R, P, O \rangle$ where:
\begin{itemize}
    \item $X$ is a set of timepoint variables composing the temporal network
    \item $R$ is a set of constraints of the form $\psi_r \rightarrow \left(l_r \leq x_r - y_r \leq u_r \right)$, where $x_r, y_r \in X$, $\psi_r$ is a label representing a conjunction of propositions or their negations, and $l_r, u_r \in \mathbb{R}$
    \item $P$ is a set of propositions
    \item $O$ is a function mapping propositions in $P$ to the timepoints where their values are observed
\end{itemize}
\end{defn}

To illustrate the usefulness of CSTNs, we provide an example. If we want to schedule the delivery of a package, we may prefer to use a CSTN to encode the urgency of the request; a package that we see marked as urgent, may need to be scheduled in the next 24 hours, but a package that is not marked as such can use a more relaxed schedule that guarantees shipment within the next week. If we have a timepoint $A$ representing when the package goes out for delivery and $B$ as the timepoint representing when the package must be delivered, we can encode the urgency using two constraints, if the package is urgent, we have the constraint $B - A \leq 1d$ with label $u$, and if the package is not urgent, we have the constraint $B - A \leq 7d$ with label $\neg u$. What makes scheduling over CSTNs notable is that we may learn about the value of proposition $u$, or in this case the urgency of the package, at some unrelated timepoint $C$ that may differ from the timepoints associated with the constraints they affect. In our given example, $C$ represents the time at which the customer tells us the package's urgency. It is possible that the customer indicates that the package is urgent the day before dropping it off, but it is equally possible that the customer tells us the package is urgent several hours after they have already dropped it off. We conditionally enforce labeled constraints by observing the realized values of the propositions and checking whether a constraint's label, $\psi_r$ is true. In the package example, we know that we will only need to enforce one of the two constraints based on what the observed value of $u$ is at timepoint $C$. We use the function $O$ to encode the timepoints at which specific propositions are observed.

Importantly, the true values of propositions are not ``scheduled'' in the same way that timepoints are. Different instantiations of the same problem may yield different values for the propositions and, correspondingly, result in different constraints that must be enforced during execution. As a result, the scheduling problem for CSTNs is different than the one for STNs, TCSPs, and DTNs. In the previously described temporal networks, we knew the full set of constraints that would be enforced prior to scheduling and as such could satisfy all constraints with an implicitly static schedule. However, with CSTNs, there is no predetermined guarantee about when the scheduler learns about propositions, as the scheduler may have to predetermine a schedule that is robust to any learned proposition values or may have the flexibility to adapt the schedule on the fly. Across these different situations, different decisions may be made with respect to scheduling that may trade off between learning the actual values of propositions early in execution and maintaining a buffer of temporal flexibility. As such, when checking feasibility of CSTNs, we use \textit{strong}, \textit{weak}, and \textit{dynamic consistency} to denote the different models under which the scheduler is guaranteed to learn the actual proposition values \cite{tsamardinos:cstn}.

Strong consistency implies there exists a schedule that can be constructed that assigns values to all timepoints in $X$, such that for every realization of the values of the propositions in $P$, all constraints in $R$ are satisfied. Strong consistency checking of a CSTN reduces to checking the temporal consistency of the underlying STN and so is computable in $O(mn)$ time \cite{tsamardinos:cstn}. A CSTN is weakly consistent if for every assignment of values to the propositions in $P$, there exists some schedule can be constructed assigning values to timepoint variables in $X$, such that all constraints in $R$ are satisfied. Weak consistency checking of CSTNs is coNP-complete \cite{tsamardinos:cstn}. Dynamic consistency is concerned with whether it is possible to dynamically construct a schedule where assignment to values in $X$ happen in order of timepoint values and the true values of propositions $p \in P$ are learned only when the corresponding timepoint given by $O(p)$ is executed. Dynamic consistency checking in CSTNs is PSPACE-complete \cite{cairo:cstn-pspace-complete}.

\subsubsection{Networks with Temporal Uncertainty}

All the formalisms we have introduced to this point have one major shortcoming; namely, they assume that the scheduler has absolute control over each and every timepoint. In practice, there exist few scenarios in which an agent has that type of total control. Agents are unable to control how much traffic will affect their morning commute or when it might start to rain. To account for this, we need a way to augment temporal networks to capture the difficulty of planning around uncertain events.

Simple Temporal Networks with Uncertainty (STNUs) extend STNs, allowing us to model events whose timings are outside the control of the scheduler but still closely related to the actions taken by the scheduler \cite{vidal:controllability}. By convention, we say that those events and actions not explicitly chosen by the scheduler are chosen by nature.

\begin{defn}
\textbf{STNU \cite{vidal:controllability}}\\
An STNU is a 4-tuple $\langle X_e, X_c, R_r, R_c \rangle$ where:
\begin{itemize}
    \item $X_e$ is the set of executable timepoint variables
    \item $X_c$ is the set of contingent timepoint variables
    \item $R_r$ is the set of requirement constraints of the form $l_r \leq x_r - y_r \leq u_r$, where $x_r, y_r \in X_c \cup X_e$ and $l_r, u_r \in \mathbb{R}$
    \item $R_c$ is the set of contingent constraints of the form $0 \leq l_r \leq c_r - e_r \leq u_r$, where $c_r \in X_c$, $e_r \in X_e$ and $l_r, u_r \in \mathbb{R}$
\end{itemize}
\end{defn}

In STNUs, timepoints are subdivided into executable and contingent timepoints and constraints are subdivided into requirement and contingent ones. Executable timepoints are the timepoints that the scheduler is responsible for, whereas contingent timepoints are scheduled by nature. Requirements constraints behave like ordinary STN constraints and are free to constrain any pair of timepoints. Contingent constraints, in contrast, represent relations between a starting executable timepoint and an ending contingent timepoint that nature is guaranteed to enforce. By convention, the lower bound of a contingent constraint is required to be non-negative to enforce that the starting timepoint of the constraint, in some sense, causes the ending timepoint. It is worth noting that we require that all contingent constraints begin from an executable timepoint. This does not have an impact on the expressivity of our networks, as it is simple to take a pair of chained contingent links and interrupt them with a simple zero-length requirement constraint, but it will simplify our proof exposition.

For simplicity, we may sometimes refer to a contingent action duration. For a contingent constraint $r$, represented as $l_r \leq c_r - e_r \leq u_r$, we say that the \textit{duration} of $r$ is the value specified by the difference $c_r - e_r$. When the contingent timepoints $x_e$ are picked by nature, it is equivalent to nature picking a series of durations for contingent actions, as the set of contingent action durations together with the set of executable timepoints uniquely determines a set of contingent timepoints.

As was the case with CSTNs, when modeling the execution of STNUs, there are events that are outside of the control of the scheduler that force us to consider schedulability in the context of how we eventually learn about nature's actions. While CSTN feasibility was centered around strong, weak, and dynamic consistency, STNU feasibility is based on \textit{strong}, \textit{weak}, and \textit{dynamic controllability} \cite{vidal:controllability}.

We say that an STNU is strongly controllable if there exists a schedule for all executable timepoints $X_e$, such that for every possible assignment of values to contingent timepoints in $X_c$ that satisfy the contingent constraints $R_c$, all of the requirement constraints $R_r$ are satisfied. STNU strong controllability checking, much like STN consistency checking, reduces to detecting the presence of a negative cycle and can be computed in $O(mn)$ time \cite{vidal:controllability}.

Weak controllability asks whether it is possible to reactively construct a schedule if the durations of the uncertain events are revealed before scheduling begins. In other words, for every fully specified set of contingent action durations that guarantee satisfaction of contingent constraints $R_c$, weak controllability asks whether it is always possible to pick a set of executable timepoints $X_e$ such that all requirement constraints $R_r$ are satisfied. While checking whether a schedule exists for any one particular realization of the uncertain events reduces to checking STN consistency, checking STNU weak controllability in general is coNP-complete \cite{morris:waypoint}.

Dynamic controllability considers the question of whether it is possible to create a just-in-time schedule, where timepoints $X_e$ and $X_c$ are chosen interactively in order of their values (with the guarantee that all $X_c$ are chosen such that all contingent constraints $R_c$ are satisfied), that guarantees the satisfaction of all requirement constraints $R_r$. Dynamic controllability of an STNU can be computed in polynomial time and more recently was shown to have a worst-case $O(n^3)$ runtime \cite{morris:n5,morris:n3}.

We will use these same notions of strong, weak, and dynamic controllability when we extend the STNU to include conditions and disjunctions.

\subsection{Compositional Temporal Networks}

The compositional temporal networks that we choose to focus on in this paper will be the ones that augment STNUs with disjunctions and conditional constraints.

\subsubsection{Disjunctions and Temporal Uncertainty}

We start by adding disjunctions to STNUs. As was the case with disjunctions added to STNs, when considering disjunctive temporal networks with uncertainty, we consider the effects of allowing both simple and full disjunctions.

Temporal Constraint Satisfaction Problems with Uncertainty (TCSPUs) augment STNUs by adding simple disjunctions over constraints.

\begin{defn}
\textbf{TCSPU \cite{venable:dtnu-weak}}\\
A TCSPU is a 4-tuple $\langle X_e, X_c, R_r, R_c \rangle$ where:
\begin{itemize}
    \item $X_e$ is the set of executable timepoint variables
    \item $X_c$ is the set of contingent timepoint variables
    \item $R_r$ is the set of simple disjunctive temporal constraints over $X_c \cup X_e$
    \item $R_c$ is the set of simple disjunctive contingent constraints
\end{itemize}
\end{defn}

By augmenting a TCSPU with full disjunctions over temporal constraints, we get Disjunctive Temporal Networks with Uncertainty (DTNUs) \cite{venable:dtnu-def}.

\begin{defn}
\textbf{DTNU \cite{peintner:dtnu-strong}}\\
An DTNU is a 4-tuple $\langle X_e, X_c, R_r, R_c \rangle$ where:
\begin{itemize}
    \item $X_e$ is the set of executable timepoint variables
    \item $X_c$ is the set of contingent timepoint variables
    \item $R_r$ is the set of full disjunctive temporal constraints over $X_c \cup X_e$
    \item $R_c$ is the set of simple disjunctive contingent constraints
\end{itemize}
\end{defn}

It is worth noting that for DTNUs, all disjunctive contingent constraints are simple. Most models of temporal uncertainty assume that the duration of a contingent link is independent of any action taken by the scheduler. Accordingly, allowing disjunctive constraints to span different contingent links or to span contingent and requirement links would violate the spirit of this approach.

The concepts of strong, weak, and dynamic controllability as defined for STNUs scale immediately to temporal networks with disjunctions. However, the introduction of disjunctions makes the act of computing controllability much more difficult. The best available algorithms for deciding strong controllability of temporal networks with uncertainty and disjunction are in EXPSPACE \cite{peintner:dtnu-strong}. Dynamic and weak controllability of these networks can be computed in exponential time, but these approaches also use exponential space. It is unknown whether any form of controllability checking for DTNUs or TCSPUs can be done in polynomial space \cite{cimatti:cdstnu,venable:dtnu-def}.

\subsubsection{Conditions and Temporal Uncertainty}

Extending STNUs with conditional constraints gives us Conditional Simple Temporal Networks with Uncertainty (CSTNUs) \cite{hunsberger:cstnu-def}.

\begin{defn}
\textbf{CSTNU \cite{combi:cstnu}}\\
A CSTNU is a tuple $\langle X_e, X_c, R_e, P, O \rangle$ where:
\begin{itemize}
    \item $X_e$ is a set of executable timepoint variables
    \item $X_c$ is a set of contingent timepoint variables
    \item $R_r$ is a set of requirement constraints of the form $\psi_r \rightarrow \left(l_r \leq x_r - y_r \leq u_r \right)$, where $x_r, y_r \in X_e \cup X_c$, $\psi_r$ is a label representing a conjunction of propositions or their negations, and $l_r, u_r \in \mathbb{R}$
    \item $R_c$ is a set of contingent constraints of the form $0 \leq l_r \leq c_r - e_r \leq u_r$, where $c_r \in X_c, e_r \in X_e$ and $l_r, u_r \in \mathbb{R}$
    \item $P$ is a set of propositions
    \item $O$ is a function mapping propositions in $P$ to the timepoints where their values are observed
\end{itemize}
\end{defn}

With CSTNUs, we now have two sources of external uncertainty, the observed values of propositions and the realized durations of contingent links. While we could evaluate consistency and controllability conditions separately (e.g. checking whether a network is strongly consistent while being dynamically controllable), we typically consider the two jointly. In other words, we assume that both the durations of contingent links and the values of the propositions are either never observed, all observed before execution, or observed along the way when we evaluate strong, weak, and dynamic controllability, respectively. Dynamic controllability of CSTNUs belongs to EXPTIME \cite{cimatti:cdstnu}, but the complexity of checking strong and weak controllability are still open questions.

\subsubsection{Conditions, Disjunctions, and Temporal Uncertainty}

Finally, we combine conditions, disjunctions, and temporal uncertainty in a single network to get Conditional Disjunctive Temporal Networks with Uncertainty (CDTNUs).

\begin{defn}
\textbf{CDTNU}\\
A CDTNU is a tuple $\langle X_e, X_c, R_e, P, O \rangle$ where:
\begin{itemize}
    \item $X_e$ is a set of executable timepoint variables
    \item $X_c$ is a set of contingent timepoint variables
    \item $R_e$ is a set of requirement constraints of the form\\$\bigvee\limits_k \psi_{r,k} \rightarrow \left(l_{r,k} \leq x_{r,k} - y_{r,k} \leq u_{r,k} \right)$, where $x_{r,k}, y_{r,k} \in X$, $\psi_{r,k}$ is a label representing a conjunction of propositions or their negations, and $l_{r,k}, u_{r,k} \in \mathbb{R}$
    \item $R_c$ is a set of simple disjunctive contingent constraints
    \item $P$ is a set of propositions
    \item $O$ is a function mapping propositions in $P$ to the timepoints where their values are observed
\end{itemize}
\end{defn}

We can apply the same techniques as those found in CSTNUs and DTNUs to show that dynamic controllability of CDTNUs can be computed in EXPTIME \cite{cimatti:cdstnu}. Algorithms for strong and weak controllability of CDTNUs have not yet been developed.

\subsection{Polynomial Time Hierarchy}

Before we continue to the actual complexity results it is useful to introduce the polynomial-time hierarchy \cite{stockmeyer:polynomial-hierarchy}, as it will allow us to more precisely characterize the difficulty of some of our controllability problems.

The classes $\Sigma_k^P$ and $\Pi_k^P$ are defined recursively. We start with $\Sigma_1^P =$ NP and $\Pi_1^P =$ coNP and define $\Sigma_{k+1}^P$ as NP$^{\Sigma_k^P}$ and $\Pi_{k+1}^P$ as coNP$^{\Sigma_k^P}$, where $A^B$ represents the set of problems that can be solved in complexity class $A$ if an oracle for a $B$-complete problem is provided.

In this paper, we will pay close attention to the complexity classes $\csigma$ and $\cpi$ and will make heavy use of the fact that $\Sigma_{k}^P = $ co$\Pi_k^P$ and that $\forall\exists$3SAT is a $\cpi$-complete problem, where $\forall\exists$3SAT is the problem of determining whether for a given 3-CNF $\Phi(\vc{x}, \vc{y})$ it is the case that for all $\vc{y}$, there exists $\vc{x}$, such that $\Phi(\vc{x}, \vc{y})$ is true \cite{stockmeyer:polynomial-hierarchy}. $\Sigma_k^P$ and $\Pi_k^P$ are also known to be fully contained within PSPACE, meaning that membership to any complexity class in the polynomial-time hierarchy guarantees the existence of a deterministic algorithm that uses at most polynomial space.

\section{Complexity}

While complexity results for the base temporal networks we have described are well-known, very few tight bounds exist for the networks derived from their composition, despite the fact that much work has been done to develop algorithms for them. Many of their hardness lower-bounds can be inherited from the base temporal networks, but it is an open question whether or not they are tight.

In this section, we will prove complexity class completeness results for each of strong, weak, and dynamic controllability for each network, updating the hardness lower-bounds as needed before demonstrating membership to the appropriate class. When describing the controllability decision problems, we will use the prefixes SC-, WC-, and DC- to refer to checking the strong, weak, and dynamic controllability of the denoted temporal network, respectively.

\subsection{Hardness Results}

We start by providing tighter hardness lower-bounds for the controllability problems across temporal networks. Existing results for CSTNs give us appropriate lower-bounds for CSTNUs, but for the temporal networks with disjunction and uncertainty, we need tighter analysis than the NP-hardness provided by TCSPs and DTNs.


\begin{lemma}
\label{wc-tcspu-hard}
Checking the weak controllability of a TCSPU is $\cpi$-hard.
\end{lemma}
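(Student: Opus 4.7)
The plan is to reduce from $\forall\exists$3SAT, which the paper notes is $\cpi$-complete. Given a 3-CNF formula $\Phi(\vc{x}, \vc{y})$, I will construct in polynomial time a TCSPU $T$ such that $T$ is weakly controllable iff for all $\vc{y}$ there exists $\vc{x}$ making $\Phi(\vc{x}, \vc{y})$ true. This is a natural target because the quantifier structure of weak controllability, namely ``for every valid assignment of contingent durations, there exists a schedule of executables satisfying all requirement constraints,'' matches the $\forall\exists$ structure exactly; it then suffices to arrange the gadgets so that contingent duration choices correspond to assignments to $\vc{y}$, executable time choices correspond to assignments to $\vc{x}$, and feasibility of the residual requirement constraints corresponds to $\Phi$ being satisfied.

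The construction will use a reference executable timepoint $Z$ and encode each propositional variable by a timepoint whose offset from $Z$ is either $0$ (meaning true) or $M$ (meaning false), for a sufficiently large constant $M$, say $M = 100$. Each $y_i$ becomes a contingent timepoint $Y_i$ attached to $Z$ with the simple disjunctive contingent constraint $Y_i - Z \in [0,0] \cup [M,M]$, so nature's duration choice fixes the truth value of $y_i$. Each $x_j$ becomes an executable timepoint $X_j$ with the analogous simple disjunctive requirement constraint $X_j - Z \in [0,0] \cup [M,M]$, letting the scheduler fix the truth value of $x_j$.

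The main obstacle is encoding a 3-clause $(l_1 \vee l_2 \vee l_3)$, which intrinsically relates three variables, using only simple disjunctive constraints over pairs of timepoints. My plan is a witness-selection gadget: for each clause $c$ introduce one auxiliary executable timepoint $A^c$ with the simple disjunctive requirement constraint $A^c - Z \in [1,1] \cup [2,2] \cup [3,3]$, intuitively asking the scheduler to name which literal witnesses $c$. For each $k \in \{1,2,3\}$, add one further pairwise simple disjunctive requirement constraint between $A^c$ and the variable timepoint $V_{i_k}$ of literal $l_k$ that forbids exactly the single pair of offsets corresponding to ``the scheduler picks $k$ as the witness yet $l_k$ is false.'' This is realized as a two-interval disjunction of the form $A^c - V_{i_k} \in [-B, \delta-1] \cup [\delta+1, B]$ for some bound $B$ larger than any achievable difference, with $\delta = k - M$ when $l_k$ is positive and $\delta = k$ when $l_k$ is negative. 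The key design choice is taking $M$ strictly larger than $3$, which guarantees that the six values the difference $A^c - V_{i_k}$ can actually take over $(A^c - Z, V_{i_k} - Z) \in \{1,2,3\} \times \{0,M\}$ are pairwise distinct, so each such constraint rules out only its intended bad configuration and nothing else.

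With the gadget in place, the rest is routine. For every fixed assignment of the contingent durations (i.e., every $\vc{y}$), the residual problem is a TCSP whose feasibility amounts to choosing offsets for the $X_j$ and the $A^c$; by the gadget's correctness, the constraints for clause $c$ are jointly feasible iff at least one literal of $c$ is true under the encoded assignment, so the entire TCSP is feasible iff there exists $\vc{x}$ with $\Phi(\vc{x}, \vc{y})$ true. Hence $T$ is weakly controllable iff $\forall \vc{y},\, \exists \vc{x},\, \Phi(\vc{x}, \vc{y})$, yielding $\cpi$-hardness of TCSPU weak controllability.
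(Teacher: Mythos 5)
Your reduction is correct, and it shares the paper's overall skeleton: both reduce from $\forall\exists$3SAT, both encode each $y_j$ as a contingent timepoint and each $x_i$ as an executable timepoint hung off a reference point $Z$ with a two-interval simple disjunction, and both exploit the fact that under weak controllability every executable is scheduled only after all contingent durations are revealed, so the quantifier structure lines up for free. Where you genuinely diverge is in the clause gadget, which is the real technical content of the lemma since a 3-literal clause must be expressed using only binary (single-pair) disjunctions. The paper routes through the classical 3SAT-to-3-coloring-to-TCSP machinery: it adds explicit negation timepoints $t_{\overline{x}_i}, t_{\overline{y}_j}$, gives each clause roughly seven auxiliary timepoints ranging over $\{0,1,2\}$, and wires them into an ``or-gate'' whose output is pinned to $1$. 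You instead use a witness-selection gadget: one auxiliary executable $A^c$ per clause taking values in $\{1,2,3\}$, plus three pairwise constraints each of which forbids exactly the configuration ``witness $k$ chosen but $l_k$ false,'' with the separation constant $M>3$ ensuring the six achievable differences are distinct so no unintended configuration is excluded. Your version is smaller (one auxiliary timepoint and four constraints per clause), handles negative literals directly through the choice of the forbidden offset $\delta$ rather than via extra negation timepoints, and makes the correctness argument for the gadget essentially a one-line case check; the paper's version buys familiarity by leaning on a known reduction and reuses the same coloring-style gadget in its later SC-DTNU hardness proof, which your construction would not directly support.
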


\begin{proof}
To show WC-TCSPU is $\cpi$-hard, we will provide a reduction from $\forall\exists$3SAT. In other words, we want to construct a TCSPU $T$ such that a formula $\forall \vc{y}, \exists \vc{x} : \phi(\vc{x}, \vc{y})$ is weakly controllable if and only if $T$ is weakly controllable, where $\vc{x}, \vc{y}$ are vectors of boolean values, and $\phi$ is a 3-CNF formula.

We start by defining our timepoints, starting with a reference timepoint $Z$. For each $x_i$, we construct timepoint $t_{x_i}$ with disjunctive constraint $t_{x_i} - Z \in [0, 0] \cup [1, 1]$. For each $y_j$, we also construct timepoint $t_{y_j}$ with contingent constraint $t_{y_j} - Z \in [0, 0] \cup [1, 1]$. These timepoints will represent the initial values chosen against which we will evaluate $\phi$ with 0 corresponding to an assignment of false and 1 corresponding to true.

For convenience, we also add timepoints corresponding to the negations of each variable. $t_{\overline{x}_i}$ has two corresponding constraints, $t_{\overline{x}_i} - Z \in [0, 0] \cup [1, 1]$ and $t_{\overline{x}_i} - t_{x_i} \in [-1, -1] \cup [1, 1]$. This ensures that $t_{\overline{x}_i}$ takes on a different value than $t_{x_i}$. Similarly, we add new timepoints $t_{\overline{y}_j}$ with requirement links $t_{\overline{y}_j} - Z \in [0, 0] \cup [1, 1]$ and $t_{\overline{y}_j} - t_{y_j} \in [-1, -1] \cup [1, 1]$. We will rely on the fact that we are evaluating weak controllability to ensure that we set the timepoints for the negated variables in response to the values assigned by nature.

We now move on to encoding each individual clause of $\phi$ into our TCSPU $T$. Our approach is going to be highly inspired by the reduction from 3SAT to the 3-coloring problem on graphs and the reduction from 3-coloring to computing feasibility of a TCSP \cite{dechter:temporal}. We will emulate the three colors by requiring all timepoints to occur at time 0, 1, or 2 and enforce that two nodes $t_i, t_j$ differ in value by requiring that $t_i - t_j \in \{-2, -1, 1, 2\}$.

\begin{figure}[!tb]
\centering
\includegraphics[width=0.75\textwidth]{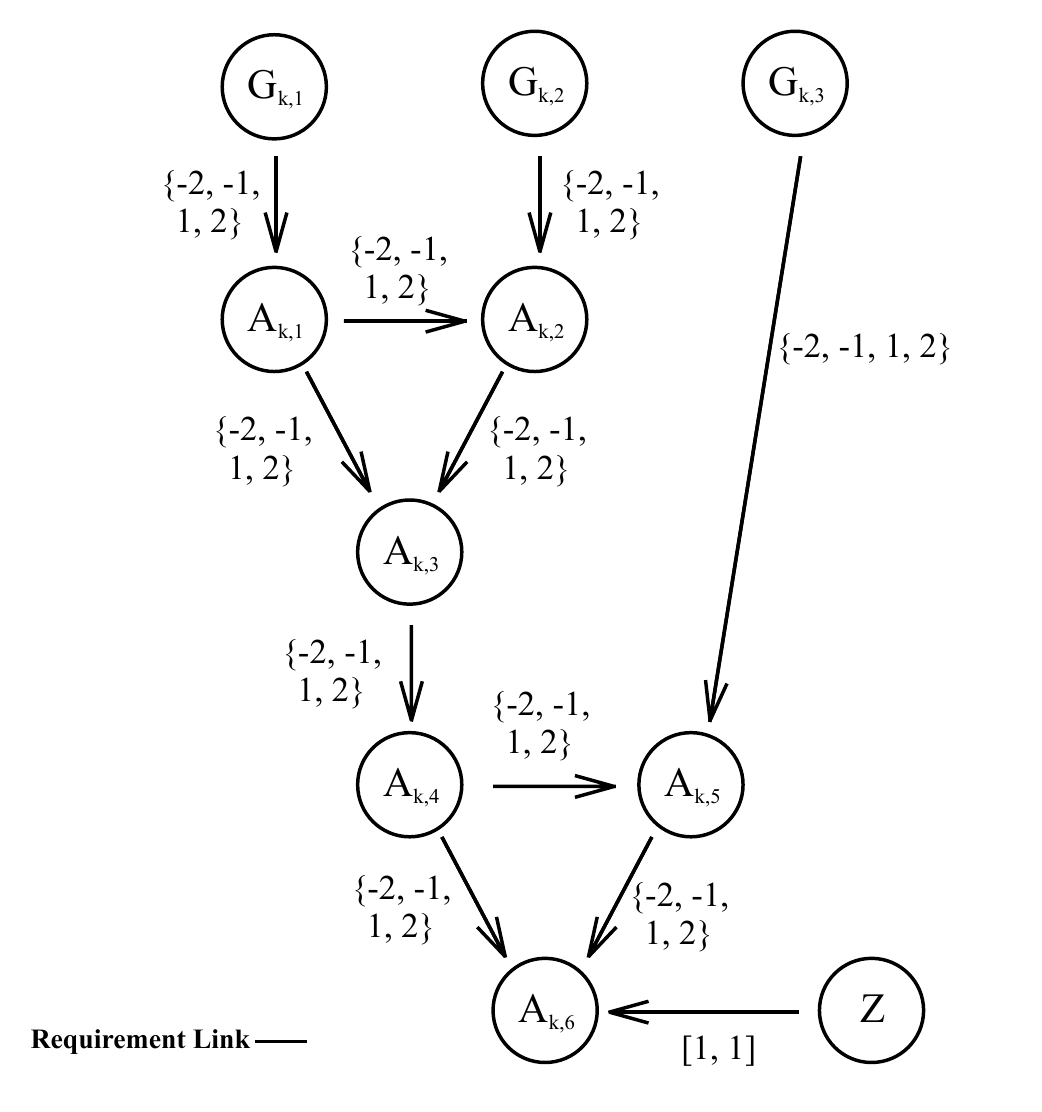}

\caption{A gadget used in the proof that WC-TCSPU is $\cpi$-hard. The $A_k$ timepoints can each take on any value from $\{0, 1, 2\}$. The value $A_{k, 6}$ represents the disjunction of $G_{k,1}, G_{k,2}, G_{k,3}$ and is constrained to equal one.}
\label{fig:wc-tcspu}
\end{figure}

For each clause $c_k$ of $\psi$, we create a new gadget whose output represents the truth value of $c_k$ (see Figure \ref{fig:wc-tcspu}). Each timepoint $G_{k,l}$ represents the truth value of literal $l$ of clause $c_k$. We require that the value matches the initially assigned value of literal $q$ by adding the constraint $G_{k,l} - t_q = 0$. The layout of timepoints $A_k$ weakly emulate an or-gate, where $A_{k,6}$ is the output and constrained to have a value of 1. For any values of the timepoints $G_k$, it is possible to assign all of the timepoints $A_k$ such that $A_{k,6} = 1$ except for when $G_{k,1} = G_{k,2} = G_{k,3} = 0$. As a result, it is possible to choose a set of values for the timepoints to satisfy the constraints of the gadget so long as at least one literal of the original clause $c_k$ is true.

Taken together, if there exists an assignment of values to timepoints such that each gadget's constraints are satisfied, then for whichever particular $\vc{y}$ we start with, then $\exists \vc{x} : \phi(\vc{x}, \vc{y})$. When checking weak controllability, all executable timepoints are assigned values after the contingent timepoints, so as we have constructed it, $T$ is weakly controllable if and only if $\forall \vc{y}, \exists \vc{x} : \phi(\vc{x}, \vc{y})$. Thus, WC-TCSPU is $\cpi$-hard.
\end{proof}

\begin{lemma}
\label{dc-tcspu-hard}
Checking the dynamic controllability of a TCSPU is PSPACE-hard.
\end{lemma}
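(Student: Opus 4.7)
The plan is to reduce from TQBF, the canonical PSPACE-complete problem of determining the truth of $Q_1 x_1 \cdots Q_n x_n : \phi(\vec{x})$ where each $Q_i \in \{\forall, \exists\}$ and $\phi$ is a 3-CNF formula. The core observation is that dynamic controllability of a TCSPU is itself a turn-based game between the scheduler and nature played along the time axis, so if I can pin the $i$-th quantifier's move to occur strictly before the $(i+1)$-th, the quantifier alternation is reproduced automatically by the dynamic execution semantics.

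Given a TQBF instance, I would construct a TCSPU $T$ in polynomial time as follows. Introduce a reference executable timepoint $Z$ and, for each $i = 1, \ldots, n$, an executable ``bracket'' timepoint $a_i$ pinned by the requirement $a_i - Z \in [3i, 3i]$. For each variable $x_i$, introduce a value timepoint $v_i$ with the simple disjunctive constraint $v_i - a_i \in [0, 0] \cup [1, 1]$. If $Q_i = \exists$, then $v_i$ is executable and this constraint is a simple disjunctive requirement whose disjunct is chosen by the scheduler; if $Q_i = \forall$, then $v_i$ is contingent with starting executable endpoint $a_i$, and the constraint is a simple disjunctive contingent link whose disjunct is chosen by nature. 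The values of $v_i$ lie in $\{3i, 3i+1\}$, which I interpret as $x_i$ being false or true respectively. Because $a_{i+1}$ is rigidly fixed at time $3(i+1)$ and each $v_j$ with $j \leq i$ must occur in $[3j, 3j+1]$, every dynamic strategy resolves $v_1, \ldots, v_i$ before touching $a_{i+1}$, precisely mirroring the quantifier prefix.

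To encode $\phi$ I would reuse the or-gate gadget of Lemma \ref{wc-tcspu-hard}. For each clause $c_k$ I introduce fresh literal timepoints $G_{k,1}, G_{k,2}, G_{k,3}$ together with the $A_{k,\cdot}$ sub-network, all placed after time $3n$ so they do not interfere with the quantifier phase; each $G_{k,l}$ is connected via simple disjunctive constraints to the appropriate $v_q$ and to $Z$ so that $G_{k,l} - Z$ equals the truth value of the corresponding literal in $\{0, 1\}$, using a single $[\cdot, \cdot]$ disjunct for positive literals and a two-disjunct constraint for negated ones, exactly as in the earlier proof. The or-gate sub-network is jointly satisfiable iff at least one literal of $c_k$ is true, so combining these gadgets with the quantifier-encoding, $T$ admits a dynamic strategy iff for every sequence of nature's disjunct choices on the $\forall$-variables the scheduler has a sequence of disjunct choices on the $\exists$-variables making every clause true, i.e. iff the QBF is true.

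The step I expect to require the most care is verifying that the alternation of step two is honest under dynamic semantics, since a scheduler might try to exploit timing by delaying an executable commitment. I would argue by strategy surgery that any dynamic strategy for $T$ can be transformed, without loss, into one that commits each executable $v_i$ at the latest admissible instant $3i+1$ before any information about $v_j$ with $j > i$ is available, after which the correspondence between dynamic strategies on $T$ and winning strategies in the QBF game becomes a bijection. The remainder, polynomial size of the construction and correctness of the clause gadget, is inherited directly from Lemma \ref{wc-tcspu-hard}, and PSPACE-hardness of DC-TCSPU follows.
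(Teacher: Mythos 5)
Your proposal is correct and follows essentially the same route as the paper: a reduction from TQBF that pins each variable's $\{0,1\}$ decision to an ordered time slot (requirement disjuncts for $\exists$, contingent disjuncts for $\forall$) so that the dynamic execution semantics reproduces the quantifier alternation, then transfers the values via rigid offsets into the clause gadgets of Lemma \ref{wc-tcspu-hard}, which are placed after all decisions have been revealed. The one slip is that you reuse the symbol $Z$ both as the time-$0$ reference and as the anchor for the gadgets (where $G_{k,l}-Z\in\{0,1\}$ is supposed to hold for timepoints occurring after time $3n$); the paper avoids this inconsistency by introducing a distinct secondary anchor placed after the entire quantifier phase.
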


\begin{proof}
To show that DC-TCSPU is PSPACE-hard, we provide a reduction from TQBF, which is known to be PSPACE-complete, to DC-TCSPU. In particular, for a problem of the form $\exists x_1 \forall y_1 ... \exists x_n \forall y_n : \phi(\vc{x}, \vc{y})$, we can construct a TCSPU $T$ such that $T$ is dynamically controllable if and only if $\exists x_1 \forall y_1 ... \exists x_n \forall y_n : \phi(\vc{x}, \vc{y})$, where $\phi$ is a 3-CNF formula.

Ideally, we would employ a strategy similar to our transformation for WC-TCSPU in Lemma \ref{wc-tcspu-hard}, but in that construction, many of the clausal gadget timepoints can occur before the contingent timepoints they relate to are assigned by nature. Because dynamic controllability requires timepoints to be assigned reactively in a just-in-time manner, we must make sure that all values of $\vc{y}$ are encoded and specified by the network before we do any subsequent computation.

We start by encoding the alternating choice of $x_i$ and $y_i$ as represented by the values decided by the scheduler and nature. We start with an anchor point $O$ and for each $x_i$ and $y_i$, we create timepoints $\tau_{x_i,s}$, $\tau_{x_i,e}$, $\tau_{y_i,s}$, and $\tau_{y_i,e}$. For each $x_i$, we create a requirement constraint of $\tau_{x_i,e} - \tau_{x_i,s} \in [0, 0] \cup [1, 1]$, and for each $y_i$, we create a contingent constraint of $\tau_{y_i,e} - \tau_{y_i,s} \in [0, 0] \cup [1, 1]$. This enforces that the difference between the start and end values is either 0 or 1, corresponding to an assignment of false or true in the original formula. To ensure that the values are chosen in order when evaluated in a dynamic controllability setting, we require that $\tau_{x_i,s} - O = 2i - 2$ and that $\tau_{y_i,s} - O = 2i - 1$. This gives us the exact alternating pattern as described by the original formula, and what remains is to evaluate the truth condition.

Our strategy for evaluating the truth of the formula is to replicate the same structures used by the constructed TCSPU in Lemma \ref{wc-tcspu-hard}. We create a secondary anchor point $Z$ with $Z - O = 2n + 2$ to ensure that $Z$ happens after all boolean values have been assigned, and then create new timepoints corresponding to the values of $\vc{x}$ and $\vc{y}$ that are anchored at $Z$ instead of at different times during the execution. For each $x_i$, we create $t_{x_i}$ with the constraint $t_{x_i} - \tau_{x_i,e} = 2(n - i) + 4$, and for each $y_i$, we create $t_{y_i}$ with the constraint $t_{y_i} - \tau_{y_i,e} = 2(n - i) + 3$. The rest of the construction, namely the construction of the negated literal values and the clausal gadgets, remains the same, and by the same reasoning, we see that it is possible for a given assignment, it is possible for all constraints to be respected if and only if $\phi$ is satisfied by that assignment of values. Since the initial timepoints are set up such that when the entire network is dynamically controllable the values of timepoints are chosen in the same order as the quantification of the original TQBF formula, we know that $T$ is dynamically controllable if and only if $\exists x_1 \forall y_1 ... \exists x_n \forall y_n : \phi(\vc{x}, \vc{y})$. Because the new network can be constructed in polynomial time, we have a polynomial time reduction from TQBF to DC-TCSPU, so DC-TCSPU is PSPACE-hard.

\end{proof}

\begin{lemma}
\label{sc-dtnu-hard}
Checking the strong controllability of a DTNU is $\csigma$-hard.
\end{lemma}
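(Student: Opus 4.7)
The plan is to reduce from $\exists\forall$3SAT, which is $\csigma$-complete: given a 3-CNF formula $\phi(\vc{x},\vc{y})$, decide whether $\exists\vc{x}\,\forall\vc{y}:\phi(\vc{x},\vc{y})$. The signature of strong controllability already matches $\csigma$ perfectly (the scheduler first commits statically to all executables, then nature picks all contingent durations), so the encoding can be substantially more direct than the TCSPU gadget used in Lemma~\ref{wc-tcspu-hard}. The key advantage of DTNUs over TCSPUs that I would exploit is that DTNUs permit \emph{full} disjunctive requirement constraints, which allows each clause of $\phi$ to be represented as a single constraint rather than through an OR-gadget.

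Concretely, I would introduce a reference executable timepoint $Z$. For each existentially quantified variable $x_i$ I would add an executable $t_{x_i}$ with the requirement constraint $t_{x_i} - Z \in [0,0]\cup[1,1]$, so that the scheduler's static choice corresponds to a truth assignment to $\vc{x}$. For each universally quantified $y_j$ I would add a contingent $t_{y_j}$ whose simple disjunctive contingent constraint is $t_{y_j} - Z \in [0,0]\cup[1,1]$; since both intervals have non-negative lower bounds this is a legal DTNU contingent link, and nature's choice of duration encodes an arbitrary assignment to $\vc{y}$. Finally, for each clause $c_k = \ell_{k,1}\vee\ell_{k,2}\vee\ell_{k,3}$ I would add a single full disjunctive requirement constraint whose $j$-th disjunct asserts that literal $\ell_{k,j}$ is true: the disjunct $t_v - Z \in [1,1]$ for a positive literal on variable $v$, and $t_v - Z \in [0,0]$ for a negative one.

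Verifying the reduction is then straightforward. From any strongly controlling schedule, read off $\vc{x}$ from the $t_{x_i}$ values; for every $\vc{y}$ nature can realize the matching contingent assignment, and the requirement constraint associated with each clause is satisfied if and only if some literal holds, so $\phi(\vc{x},\vc{y})$ is true. Conversely, a witness $\vc{x}^*$ for $\exists\vc{x}\,\forall\vc{y}\,\phi(\vc{x},\vc{y})$ yields a static schedule on the $t_{x_i}$'s that satisfies every clause constraint via some true literal regardless of nature's assignment. The construction is plainly polynomial-time, establishing $\csigma$-hardness.

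The main obstacle I anticipate is really just bookkeeping: one has to be sure the disjunctive contingent links are legitimate DTNU objects, namely that contingent constraints have non-negative bounds, lie between an executable start and a contingent end, and (unlike requirement links) admit only \emph{simple} disjunctions. The encoding $t_{y_j} - Z \in [0,0]\cup[1,1]$ meets all of these. Beyond this, the argument is essentially a one-line translation of quantifier alternation into scheduler/nature roles, and no gadget machinery analogous to that of Lemma~\ref{wc-tcspu-hard} is required.
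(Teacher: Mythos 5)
There is a genuine gap, and it is in the very first step: the source problem you reduce from is not $\csigma$-complete. The problem ``given a 3-\emph{CNF} formula $\phi(\vc{x},\vc{y})$, decide $\exists\vc{x}\,\forall\vc{y}:\phi(\vc{x},\vc{y})$'' is in NP: guess $\vc{x}$, substitute it, and note that the resulting CNF holds for all $\vc{y}$ iff every clause not already satisfied by $\vc{x}$ contains a complementary pair $y_j,\neg y_j$, which is checkable in polynomial time. Universal quantification is only coNP-hard over a \emph{DNF} matrix, just as existential quantification is only NP-hard over a CNF matrix. So your reduction, while internally consistent, only re-establishes NP-hardness of SC-DTNU, which already follows from DTN feasibility and falls short of the lemma. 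The $\csigma$-complete problem you need is $\exists\vc{x}\,\forall\vc{y}:\psi(\vc{x},\vc{y})$ with $\psi$ in 3-\emph{DNF} --- equivalently the complement of $\forall\exists$3SAT, which is exactly what the paper reduces from.

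Once the matrix is a DNF, your ``one constraint per clause'' shortcut collapses: the condition to enforce becomes $\bigvee_k\bigwedge_l(\text{literal}_{k,l})$, a disjunction of \emph{conjunctions}, whereas a DTNU requirement constraint is only a disjunction of atomic interval constraints; distributing into a conjunction of disjunctions over the atoms blows up exponentially. This is precisely why the paper cannot avoid gadgetry: it introduces contingent timepoints $G_{k,l}$ and $G_{k,and}$ together with an AND-simulating sub-gadget of timepoints $A_{k,m}$, and folds every ``nature deviated from the intended encoding'' case into extra disjuncts of a single goal constraint, so that any non-conforming choice by nature renders the network trivially controllable. Your closing claim that no gadget machinery analogous to Lemma~\ref{wc-tcspu-hard} is required is therefore the opposite of the situation for this lemma; the gadgets are what allow a conjunction to be evaluated by the adversary after the scheduler has already committed its static schedule.
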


\begin{proof}
To prove that SC-DTNU is $\csigma$-hard, we will reduce the complement of $\forall\exists$3SAT, a $\cpi$-complete problem, to SC-DTNU.

An example problem of $\forall\exists$3SAT is of the form $\forall \vc{x}, \exists \vc{y} : \phi(\vc{x}, \vc{y})$, where $\vc{x}, \vc{y}$ are vectors of boolean values and $\phi$ is a 3-CNF formula. The complementary problem is $\exists \vc{x}, \forall \vc{y} : \psi(\vc{x}, \vc{y})$, where $\psi$ is a 3-DNF formula representing the negation of $\phi$. Given the input problem, we construct a corresponding DTNU $D$ that is strongly controllable if and only if the complementary formula $\psi$ is true (if the original formula $\phi$ is false).

First we define the timepoints of $D$. We start with a reference timepoint $Z$, which represents the first point to be executed. For each $x_i \in \vc{x}$, we add points $t_{x_i}$ and $t_{\overline{x}_i}$ to represent the value of $x_i$ and its negation during some candidate assignment to our formula. We do the same thing for $\vc{y}$ adding $t_{y_j}$ and $t_{\overline{y}_j}$ for each $y_j \in \vc{y}$. We also introduce a new gadget per clause of $\psi$ (see Figures \ref{fig:sc-dtnu-weak-or} and \ref{fig:sc-dtnu-gadget}) and in each gadget, we introduce ten new timepoints. Timepoints $G_{k, 1}$, $G_{k, 2}$, and $G_{k, 3}$ represent the values of each literal of clause $k$ and timepoint $G_{k, and}$ represents the value of the conjunction of those literals. For each clause, we also add $A_{k, 1}$, $A_{k, 2}$, $A_{k, 3}$, $A_{k, 4}$, $A_{k, 5}$, and $A_{k, 6}$ which are used collectively to simulate an and clause. By appropriately adding contingent and requirement links between these timepoints, we will get a DTNU that is controllable if and only if the original formula $\psi$ is true.

We start by adding constraints to encode the initial assignment of values. For each $t_{x_i}$ we add a simple disjunctive constraint requiring that $t_{x_i} - Z \in [0, 0] \cup [1, 1]$. Similarly, for each $t_{y_j}$, we add a disjunctive \textit{contingent} constraint enforcing $t_{y_j} - Z \in [0, 0] \cup [1, 1]$. The choice of values for these initial timepoints maps directly back to an assignment of values in the 3-DNF formula $\psi$ with 0 representing false and 1 representing true.

We also enforce the values of the negations of these variables for convenience, with the same simple disjunctive constraint requiring $t_{\overline{x}_i} - Z \in [0, 0] \cup [1, 1]$ and the disjunctive contingent constraint enforcing $t_{\overline{y}_j} - Z \in [0, 0] \cup [1, 1]$. To ensure that $x_i$ and its negation take on values we also add the requirement that $t_{x_i} - t_{\overline{x}_i} \in \cup [-1, -1] \cup [1, 1]$. We will discuss our strategy for ensuring that the values of $t_{y_j}$ and $t_{\overline{y}_j}$ differ below.

We now move on to the constraints associated with the clausal gadgets. $G_{k,l}$ represents the truth value of the $l^{th}$ element of clause $k$, and $G_{k,and}$ represents the truth value of the entire clause; each timepoint, $G_{k,*}$, that is newly created for the gadget is initialized using a contingent constraint enforcing $G - Z \in [0, 0] \cup [1, 1]$. We also create a disjunctive constraint across all gadgets, such that if for any $k$, $G_{k,and} - Z = 1$, then the constraint is satisfied. We call this disjunctive constraint the \textit{goal constraint}. This has an immediate correspondence to the notion that the entire formula $\psi$ is satisfied if any of its constituent clauses is satisfied.

\begin{figure}[!tb]
\centering
\includegraphics[width=0.75\textwidth]{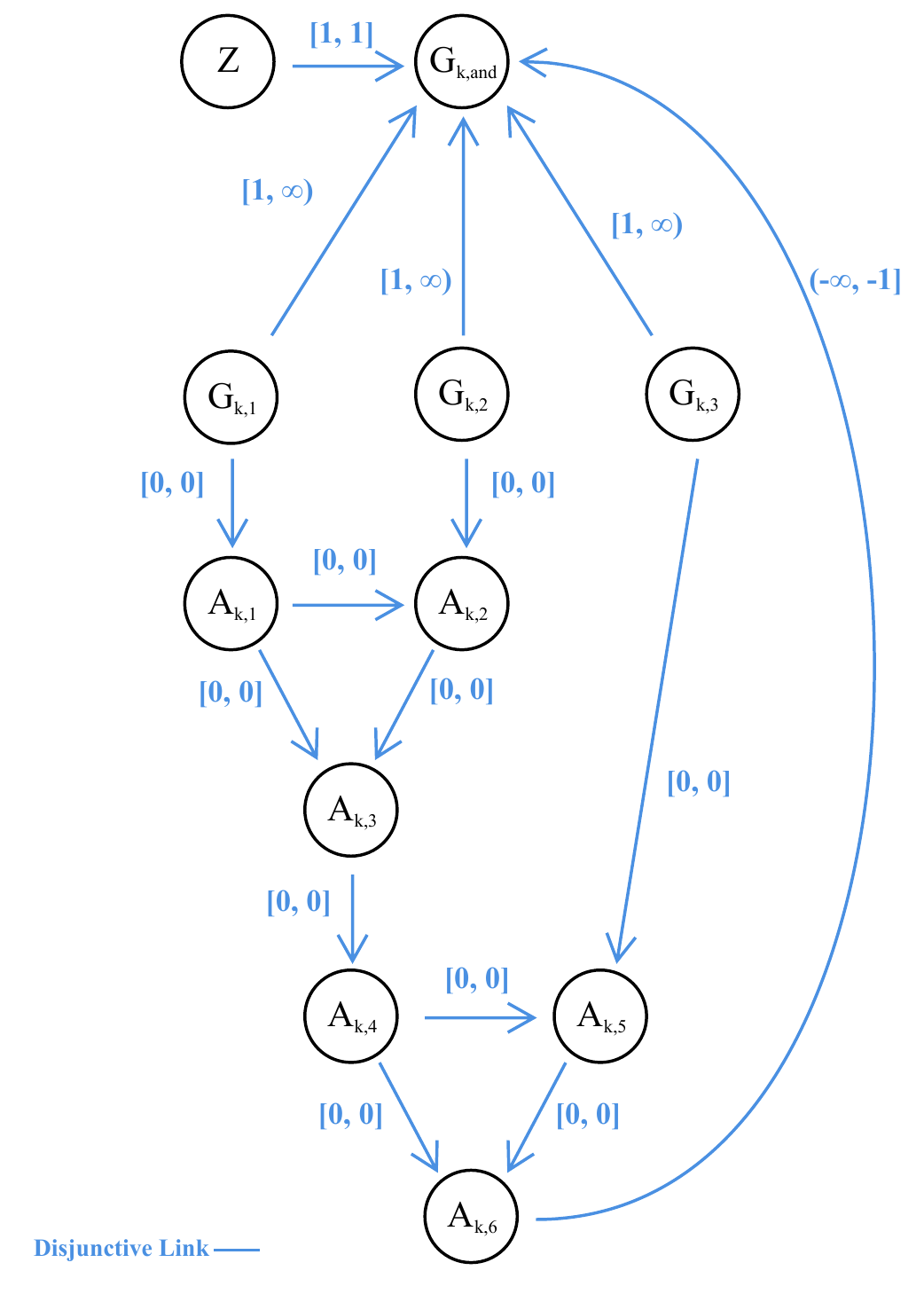}

\caption{A description of the disjunctive goal links found in each gadget used in the proof that SC-DTNU is $\csigma$-hard. The $A_k$ timepoints can each take on any value from $\{0, 1, 2\}$. The value $A_{k, 6}$ will only be precluded from taking on a value of 0 when all of $G_{k,1}, G_{k,2}, G_{k,3}$ are 1. The disjunctive constraints of this gadget are all individual parts of the larger collective disjunctive goal constraint.}
\label{fig:sc-dtnu-weak-or}
\end{figure}

\begin{figure}[!tb]
\centering
\includegraphics[width=0.75\textwidth]{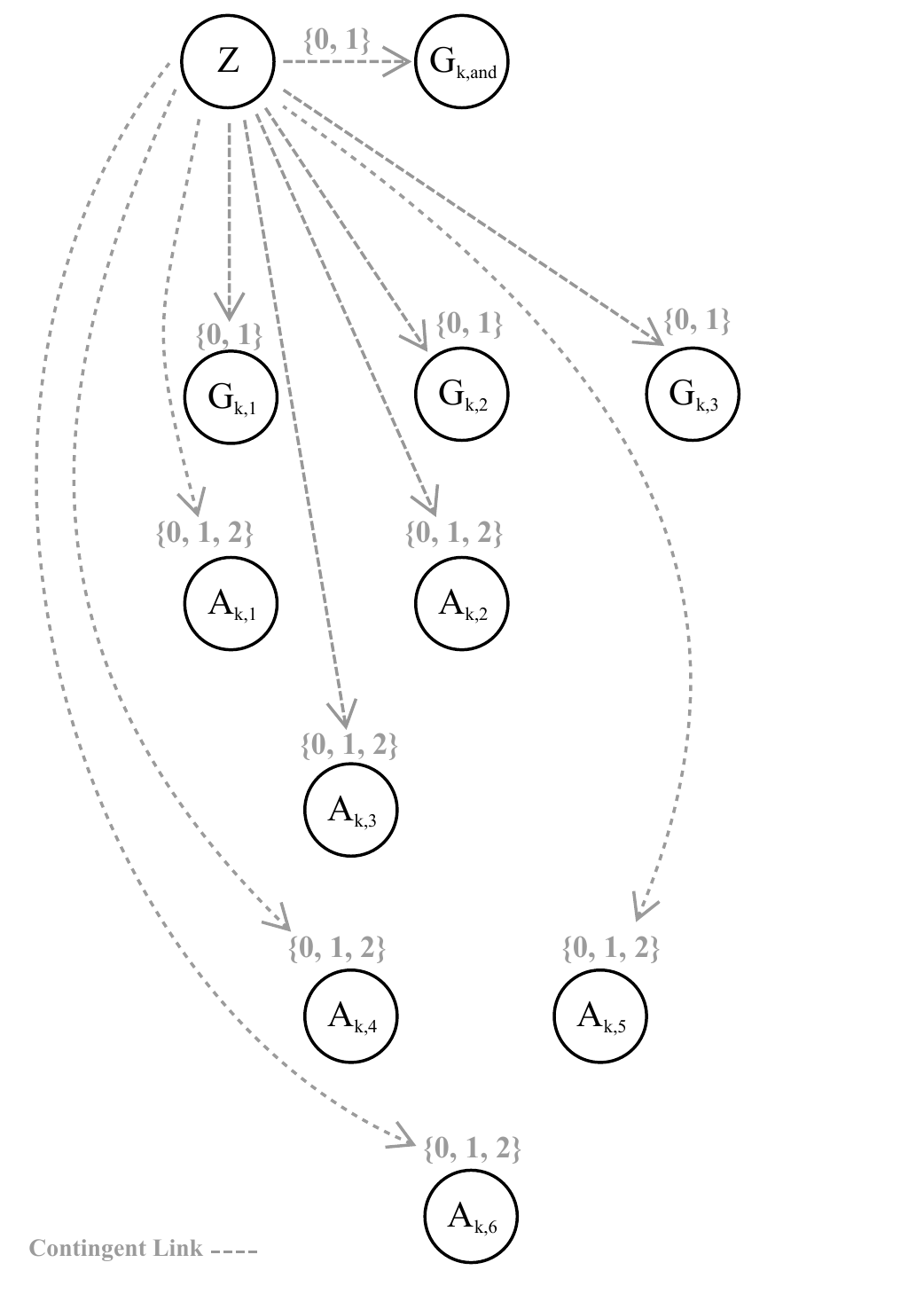}

\caption{A description of the contingent links found in each gadget used in the proof that SC-DTNU is $\csigma$-hard. The links between $Z$ and each $G_{k, l}$ are contingent links but are constrained to be equal in length to the original $x_i$, $y_j$ they relate to using the shared disjunctive goal constraint.}
\label{fig:sc-dtnu-gadget}
\end{figure}

Our current construction makes heavy use of contingent constraints, and while we may want the timepoints in our gadgets to represent certain values, their values are chosen by nature, meaning we have no way to directly control their values.

However, we do have control over the constraints of $D$ and, in particular, the disjunctive constraint that spans the gadgets. We can think of checking strong controllability as a two-player game, where the scheduler goes first and nature goes second. Nature's goal is to construct an assignment such that some constraint is violated. Upon closer examination, we see that in our construction, the only constraint that can be affected by the contingent link durations chosen by nature is the goal constraint. If there exist certain combinations of contingent link durations that we want to preclude from our evaluation, we can do so by adding additional disjunct to the goal constraint that are satisfied when those contingent links take on those durations. In this way, any contingent link values that do not conform to our desired structure make $D$ trivially controllable, and controllability then reduces to controllability under our desired set of constraints across contingent links.

First, we need to make sure that the timepoints $t_{y_j}$ and $t_{\overline{y}_j}$ take on different values. We can ensure this by adding $t_{y_j} - t_{\overline{y}_j} = 0$ to our goal constraint; if nature gives $t_{y_j}$ and $t_{\overline{y}_j}$ the same value, then we trivially ignore this case. Similarly, since we want $G_{k, l}$ to take on the same value as the literal $q$ it represents, we augment our disjunctive goal constraint with $G_{k,l} - t_q \in [-1, -1]$ and $G_{k,l} - t_q \in [1, 1]$ where $t_q$ is the timepoint associated with literal $q$. As a result, if the clausal representation of the variable differs from our assignment, our network is trivially controllable.

We enforce the conjunction of the elements of each clause by augmenting our goal constraint with $G_{k,and} - G_{k, l} \geq 1$ for each $G_{k,l}$ of our clause gadget. Since each timepoint of our gadget can take on a value of 0 or 1, this constraint will only be satisfied if some literal value is 0 while $G_{k,and}$ has a value of 1. In these situations, $G_{k,and}$ does not represent the conjunction of the literals of clause $k$, and our network then becomes trivially controllable.

Unfortunately, our network still does not perfectly encode the conjunction seen in a DNF clause. It is possible for each $G_{k, l}$ to take on a value of 1 while $G_{k, and}$ is assigned a value of 0. As a result, it may be the case that the original problem, $\exists \vc{x} \forall \vc{y} \psi(\vc{x}, \vc{y})$ is true but each $G_{k,and}$ is set to 0, meaning that the network is not strongly controllable.

To fix this, we must augment our gadget to enforce that identical inputs have the same output. This is the reason for introducing timepoints $A_{k, m}$, and these timepoints' values are set by new contingent links that enforce $A_{k,m} - Z \in [0, 0] \cup [1, 1] \cup [2, 2]$. Through an exhaustive enumeration of possible values, we can confirm that whenever $G_{k,1}, G_{k,2}, G_{k,3}$ are all 1, either $A_{k, 6}$ will be 1 or one of the disjuncts of the goal constraints (see Figure \ref{fig:sc-dtnu-weak-or}) will be satisfied. In this case, when we add $A_{k, 6} - G_{k,and} \geq 1$ to the goal constraint, we know that when $G_{k,1}, G_{k,2}, G_{k,3}$ are all equal to 1, $D$ is controllable, as either $G_{k,and} = 1$, meaning $G_{k,and} - Z = 1$, which satisfies the goal constraint, or $G_{k,and} = 0$, implying $A_{k,6} - G_{k,and} = 1$, which also satisfies the goal constraint.

Before continuing, we need to confirm that the addition of the new sub-gadget does not introduce any new problems. For all other values of $G_{k,1}$, $G_{k,2}$, and $G_{k,3}$, we know it is possible for $A_{k,6}$ to take on a value of 0. Since $A_{k,6}$ is the only timepoint of the or-gate that is related to other values by the goal constraint and setting it to 0 does not satisfy the goal constraint, we know that if $G_{k,1}, G_{k,2}, G_{k,3}$ are not all 1, then there exists a choice of values by nature such that the goal constraint is not satisfied by gadget $k$.

Our transformation is complete and because there is one gadget per clause in $\psi$ and each gadget is of constant size, we see that the transformation takes polynomial time. What remains is to show that $D$ is strongly controllable if and only if $\exists \vc{x} \forall \vc{y} \psi(\vc{x}, \vc{y})$ is true. This is evident from our construction.

If $D$ is strongly controllable, there must be some set of assignment to values $t_{x_i}$ such that no possible assignment of values to the other timepoints violates any of the constraints. We can prove this by contradiction, assuming that although our choice of $t_{x_i}$ guarantees the satisfaction of all other constraints in $D$, there is no choice of $\vc{x}$ that guarantees satisfaction of $\forall \vc{y} \psi(\vc{x}, \vc{y})$. Let $\vc{x}$ be specified such that $x_i$ is true if and only if $t_{x_i} = 1$. If $\psi$ is not guaranteed to be satisfied, there must be some $\vc{y}$ such that $\psi(\vc{x}, \vc{y})$ is false. Returning to $D$, assume that nature specifies $t_{y_j}$ such that $t_{y_j} = 1$ if and only if $y_j$ is true. Since $D$ is strongly controllable, we know that some disjunctive goal constraint is satisfied no matter the assignment of contingent timepoint variables. Let's assume that all $t_{\overline{y}_j}$ are chosen such that they represent the negation of their corresponding $t_{y_j}$, that all $A_{k, m}$ of the gadgets are chosen such that the disjunctive constraints involved between all $G_{k,l}$ and $A_{k,m}$ are not satisfied, and that all $G_{k, and}$ are chosen to be 0. The only remaining disjunctive constraints are those involving each $G_{k, and}$. For any particular $k$, setting $G_{k, and}$ to 0 only satisfies a constraint if $A_{k, 6}$ is 1, so given all these assumptions, at least one $A_{k, 6}$ must be set to 1 (otherwise the system would be uncontrollable). As we demonstrated earlier, $A_{k, 6}$ is only constrained to be 1 when all of $G_{k, 1}$, $G_{k, 2}$, and $G_{k, 3}$ are also 1. But those three values correspond exactly to literals in a clause of $\psi(\vc{x}, \vc{y})$. If all three are 1, then we have a true clause and because $\psi$ is a 3-DNF formula, this means that $\psi$ is true. We have a contradiction. Therefore if $D$ is controllable, $\exists \vc{x} \forall \vc{y} \psi(\vc{x}, \vc{y})$.

To conclude we show the reverse direction, that if $\exists \vc{x} \forall \vc{y} \psi(\vc{x}, \vc{y})$ is true, then $D$ is strongly controllable. Let $\vc{x}$ be the assignment of variables that guarantees $\forall \vc{y} \psi(\vc{x}, \vc{y})$; we show how we can use $\vc{x}$ to show that $D$ is strongly controllable. We will pick our $t_{x_i}$ such that $t_{x_i} = 1$ if and only if $x_i$ is true and will pick our $t_{\overline{x}_i}$ such that $t_{\overline{x}_i} \neq t_{x_i}$. Again we will proceed with proof by contradiction, assuming that $D$ is not strongly controllable. Our choice of $t_{x_i}$ and $t_{\overline{x}_i}$ satisfy all constraints except the disjunctive goal constraint, so there must be a choice of contingent timepoints that violate the disjunctive goal constraint. We know setting $G_{k, and} = 1$ satisfies the goal constraint, so all $G_{k, and} = 0$. By proxy, for all $k$, $A_{k, 6} = 0$ to ensure that the goal constraint is not satisfied because of the link between $A_{k, 6}$ and $G_{k, and}$. Because $A_{k, 6} = 0$, it must be the case that for each gadget, at least one of $G_{k, 1}$, $G_{k, 2}$, or $G_{k, 3}$ must equal zero. In order for the goal disjunctive constraint to remain unsatisfied, each $G_{k, l}$ must maintain the same value as some $t_{x_i}$, $t_{\overline{x}_i}$, $t_{y_j}$, or $t_{\overline{y}_j}$ based on the values of the clauses of $\phi$. This forces a particular assignment of values to $t_{y_j}$ which we can map back to some $\vc{y}$. For that particular assignment, we know that $\psi(\vc{x}, \vc{y})$ is true, or that there is some clause $k'$ with all literals set to true. This contradicts the fact that for all $k$, at least one of $G_{k, 1}$, $G_{k, 2}$, or $G_{k, 3}$ must be zero. Thus, $D$ must be strongly controllable, and we have proven that SC-DTNU is $\csigma$-hard.

\end{proof}

\subsection{Completeness}

We now move on to proving completeness for the controllability problems on each temporal network. Our general approach for characterizing the complexity of controllability problems will be to map an inputted temporal network to a corresponding system of conditional linear inequalities that encode the same constraints. We will then use existential and universal quantifiers over the variables to dictate which type of controllability is being determined.

Our transformation proceeds as follows. We can imagine the execution of a temporal network as being a game played between two agents, the scheduler and nature, where the scheduler assigns times to executable timepoints and nature assigns times to contingent timepoints. In general the question of determining controllability will reduce to the problem of evaluating a quantified linear system and our techniques draw inspiration from and are related to approaches in those areas \cite{eirinakis:qli,subramani:qlp}.

For notational convenience, we will split our variables into $\vc{x}$ and $\vc{y}$ for those assigned by the scheduler and nature, respectively. For each executable timepoint $e_i$, we create a new variable $x_i$, and for each contingent timepoint $c_i$, we create a new variable $y_i$.

We create a one-to-one mapping between the set of temporal network constraints and the new linear inequalities. First, we replace all executable timepoints $e_i$ with the corresponding $x_i$. With the contingent timepoints, however, we need to be more careful. For each contingent timepoint $c_i$, we find the contingent constraint that restricts it of the form $l_c \leq c_i - e_j \leq u_c$. We then replace each instance of $c_i$ in our constraints with $y_i + x_j$. Our reason for doing this has to do with the nature of contingent constraints. In temporal networks, there is a guarantee that nature respects the contingent constraint bounds in relation to its corresponding starting executable timepoint. So while free constraints relate timepoints in terms of the absolute time of their occurrence, contingent constraints require nature to respect relative timings of events. If the durations of contingent constraints are to be known before scheduling begins, as is the case with weak controllability, then the constructed system of linear inequalities will fail to map to the base temporal network if nature is asked to pick the precise times of contingent events.

After the substitutions, each constraint is a combination of conditional linear inequalities of the form $\psi \rightarrow \vc{a} \cdot \begin{bmatrix} \vc{x} \\ \vc{y} \end{bmatrix} \leq b$, where $b$ is some constant, $\psi$ is a (possibly empty) precondition for the enforcement of the constraint, and $\vc{a}$ represents the coefficients of the constraints where each coefficient is either -1, 0, or 1. Since all constraints are relative, without loss of generality, we can say that the earliest event happens at time $t = 0$, meaning we can safely require that $\vc{x} \geq 0$. When we quantify over variables to pick controllability, we require that each $x_i$ has an existential quantifier and each $y_i$ has a for-all quantifier drawn from the union of the ranges $[l_1, u_1], ..., [l_d, u_d]$, where $l_j$ and $u_j$ are retrieved from one of $c_i$'s corresponding contingent constraints.

When evaluating controllability for disjunctive networks, it is useful to consider each contingent range separately, and so we will define $\Omega$ as a mapping from each variable $y_i$ and one of its possible continuous ranges. In general, we will use the shorthand $\forall \Omega$ to indicate that we are considering all possible mappings and $\forall \vc{y} \in \Omega$ to specify that we are drawing our $\vc{y}$ from one particular mapping. Our choice of the ordering of the quantifiers will dictate which type of controllability will be considered. We also must consider how conditions affect our model, and will define $\Psi$ as the full set of conditions that can be observed by the scheduler when our temporal networks include conditional constraints.

It is also worth noting that whenever we consider a vector of values $\vc{x_c}$ that represent a solution to our scheduling problem, we assume that the representation of $\vc{x_c}$ is polynomial in the size of the original input. While we are agnostic to which particular representation is used, we do still require a fixed number of bits required to represent each individual number. The implication of this is that between any two numbers, there are a finite number of intermediate values that can be represented.

The rest of our analysis is divided into an analysis of strong controllability over temporal networks, weak controllability over temporal networks, and dynamic controllability over temporal networks.

\subsubsection{Strong Controllability}

\begin{theorem}
\label{thm:cstnu-sc-p}
Checking strong controllability of a CSTNU is in P.
\end{theorem}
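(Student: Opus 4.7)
The plan is to reduce CSTNU strong controllability to STNU strong controllability, which the background section already states is decidable in $O(mn)$ time. Using the quantified linear inequality framework introduced just before the theorem, the strong controllability question for a CSTNU can be written as
\[
\exists \vc{x} \;\forall \Psi \;\forall \vc{y} \in \Omega : \bigwedge_{r \in R_r} \Bigl( \psi_r \rightarrow \vc{a}_r \cdot \begin{bmatrix} \vc{x} \\ \vc{y} \end{bmatrix} \leq b_r \Bigr),
\]
where $\Omega$ is the (single, non-disjunctive) assignment of each $y_i$ to its contingent range. Note that contingent constraints in a CSTNU carry no labels, so only the requirement constraints are affected by $\Psi$.

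The first step is to observe that each label $\psi_r$ is a conjunction of literals over $P$, and hence is satisfiable by some assignment of truth values to $P$ (in particular, by the assignment that makes every literal appearing in $\psi_r$ true). Because the outer $\Psi$ quantifier ranges over all such assignments, for each individual requirement constraint there is at least one $\Psi$ that activates it. Consequently, the above formula is equivalent to the one obtained by erasing every antecedent $\psi_r$:
\[
\exists \vc{x} \;\forall \vc{y} \in \Omega : \bigwedge_{r \in R_r} \vc{a}_r \cdot \begin{bmatrix} \vc{x} \\ \vc{y} \end{bmatrix} \leq b_r.
\]
This is exactly the linear-inequality formulation of strong controllability for the STNU $\langle X_e, X_c, R_r', R_c\rangle$ obtained by stripping labels from the CSTNU's requirement constraints.

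The second step is just to invoke the known STNU result: strong controllability of an STNU reduces, after substituting each contingent timepoint $c_i$ by $x_{j(i)} + y_i$ and tightening each resulting bound with respect to the worst-case $y_i$ in its contingent range, to checking for a negative cycle in the resulting distance graph, and this can be done in $O(mn)$ time. Since the label-stripping step is a linear-time syntactic rewrite, and the CSTNU is strongly controllable iff the stripped STNU is, the overall procedure runs in polynomial time. The main obstacle (and really the only non-routine point) is justifying the label-erasure step, and that follows from the elementary observation that a conjunction of literals is always satisfiable together with the fact that contingent constraints carry no labels; everything else is an appeal to the pre-existing STNU algorithm.
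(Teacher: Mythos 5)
Your proposal is correct and follows essentially the same route as the paper: encode strong controllability as a quantified system of conditional linear inequalities, observe that since $\vc{x}$ is fixed before the universal quantification over $\Psi$ each label $\psi_r$ is activated by some scenario and so the antecedents can be erased, and then reduce to STNU strong controllability. Your explicit remark that every (consistent) conjunction of literals is satisfiable is just a slightly more careful justification of the paper's step ``it must also hold when $\psi_i$ is true.''
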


\begin{proof}
We start by encoding the SC-CSTNU problem in our described format:
$$\exists \vc{x} \forall \vc{y} \forall \Psi : \bigwedge\limits_i \psi_i \rightarrow \vc{a_i} \cdot \begin{bmatrix}\vc{x} \\ \vc{y} \end{bmatrix} \leq b_i$$
Because $\forall \bigwedge \phi$ is the same as $\bigwedge \forall \phi$, we can rewrite our problem as:
$$\exists \vc{x} \forall \vc{y} \bigwedge\limits_i \forall \Psi : \psi_i \rightarrow \vc{a_i} \cdot \begin{bmatrix}\vc{x} \\ \vc{y} \end{bmatrix} \leq b_i$$
Since the inner equation must hold for all $\Psi$, it must also hold when $\psi_i$ is true, allowing us to eliminate the conditionals:
$$\exists \vc{x} \forall \vc{y} \bigwedge\limits_i : \vc{a_i} \cdot \begin{bmatrix}\vc{x} \\ \vc{y} \end{bmatrix} \leq b_i$$
But of course, this is exactly the encoding for checking strong controllability of an STNU. Since STNU strong controllability is verifiable in polynomial time \cite{vidal:controllability}, our work demonstrates that strong controllability of a CSTNU can be determined in polynomial time through reduction to an STNU.
\end{proof}

\begin{theorem}
Checking the strong controllability of TCSPUs is NP-complete.
\end{theorem}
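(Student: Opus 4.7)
The plan is to split the NP-completeness claim into the two usual pieces: NP-hardness via an essentially trivial restriction, and NP membership via a polynomial-size certificate that exploits the simple (same-pair) structure of the disjunctions.

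For the lower bound, I would observe that a TCSPU with $X_c = \emptyset$ and $R_c = \emptyset$ is literally a TCSP, and in that case SC-TCSPU coincides with TCSP consistency (the universal over $\vc{y}$ is vacuous). Since TCSP consistency is already NP-complete \cite{dechter:temporal}, NP-hardness follows immediately.

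For NP membership, I would cast the problem in the quantified linear inequality framework from the beginning of this section, writing SC-TCSPU as
\[
\exists \vc{x} \; \forall \Omega \; \forall \vc{y} \in \Omega : \bigwedge_r \bigvee_k \bigl(l_{r,k} \leq x_r - y_r \leq u_{r,k}\bigr),
\]
where $\Omega$ selects one contingent disjunct per contingent variable. The nondeterministic certificate I would use specifies, for each simple disjunctive requirement constraint $r$ and each combination $\Omega$ of contingent disjuncts for the (at most two) contingent variables appearing in $r$, a single requirement disjunct $k_{r,\Omega}$. Because simple disjunctive requirements involve a fixed pair of variables, the number of $(r,\Omega)$ pairs is polynomial, so the certificate has polynomial size. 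Verification then proceeds in two steps. First, after WLOG-merging any pair of overlapping requirement intervals into their union the requirement disjuncts become pairwise disjoint; then the image of $x_r - y_r$ as the relevant contingent variable sweeps through its chosen interval is itself a single connected interval, so it must lie inside exactly one disjunct, and committing to one $k_{r,\Omega}$ is without loss of generality. Second, after this substitution the system reduces to a conjunction of STN-style inequalities that must hold for all $\vc{y}$ in each combo; monotonicity in each contingent coordinate lets me replace ``for all $\vc{y}$'' by a substitution of the extreme values, yielding a polynomial-size linear feasibility problem in $\vc{x}$ that is solvable in P.

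The main obstacle, and the reason the argument is not completely immediate, is justifying that one requirement disjunct per $(r,\Omega)$ suffices. The naive worry is that different values of $\vc{y}$ inside a single combo might demand different disjuncts of the same requirement, which would blow the certificate up to exponential size. The connected-image observation above is exactly what rules this out and validates the compact certificate; once that lemma is in hand, the rest is a standard LP-feasibility reduction.
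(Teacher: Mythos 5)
Your proof is correct, and its hardness half (restricting to $X_c = R_c = \emptyset$ so that SC-TCSPU collapses to TCSP consistency) is exactly the paper's, but your membership argument takes a genuinely different route. The paper keeps the certificate minimal --- just the candidate schedule $\hat{x}$ --- and instead rewrites each same-pair union of intervals as a polynomial-size CNF of two-literal clauses ($t_i - t_j \geq l_1$, $t_i - t_j \leq u_k$, and for each gap between consecutive intervals the clause ``$t_i - t_j \leq u_{p-1}$ or $t_i - t_j \geq l_p$''); the universal quantifier over $\vc{y}$ then distributes over the conjunction, and each clause is verified by showing that its negation (a conjunction of strict inequalities in $\vc{y}$) is infeasible as a linear program, using an $\epsilon$-margin objective to handle the strictness. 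You instead put the combinatorial content into the certificate --- one requirement disjunct per constraint and per combination of contingent disjuncts of its at most two contingent endpoints --- justified by your connected-image lemma, after which verification is an ordinary LP in $\vc{x}$ obtained by substituting extreme values of $\vc{y}$. Both arguments hinge on the same structural fact that simple disjunctions confine each constraint to a single difference, so its admissible set is a union of disjoint intervals on a line; your observation that a connected image must land in exactly one interval and the paper's two-literal CNF rewriting are two faces of that fact. Your version handles the disjunctive contingent ranges explicitly (the paper's displayed formulas write $\forall \vc{y}$ without spelling out how the LP domain accounts for the union-of-intervals contingent constraints) and avoids strict inequalities altogether; the paper's version has the advantage that its certificate is precisely a strongly controllable schedule, which is the more natural witness and immediately yields the object a planner actually wants.
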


\begin{proof}
We know that checking the feasibility of a TCSP is NP-hard \cite{dechter:temporal}, and because TCSPUs are a generalization of TCSPs, it follows that SC-TCSPU is NP-hard. To prove completeness, we show that SC-TCSPU $\in$ NP.

In a TCSPU, all disjunctive requirement links span the same pair of variables, meaning that every requirement link is of the form $t_i - t_j \in [l_1, u_1] \cup ... \cup [l_k, u_k]$, where for every $p < q$, $u_p < l_q$. This allows us to rewrite all individual constraints as:
$$t_i - t_j \geq l_1 \wedge \left(\bigwedge\limits_{p=2}^k t_i - t_j \leq u_{p-1} \vee t_i - t_j \geq l_p \right) \wedge$$
$$t_i - t_j \leq u_k$$
Now when we encode strong controllability of a TCSPU, we can write the formula as:
$$\exists \vc{x} \forall \vc{y} : \bigwedge\limits_i \bigvee\limits_j \vc{a_{ij}} \cdot \begin{bmatrix}\vc{x} \\ \vc{y} \end{bmatrix} \leq b_{ij}$$
$$\exists \vc{x} \bigwedge\limits_i \forall \vc{y} : \bigvee\limits_j \vc{a_{ij}} \cdot \begin{bmatrix}\vc{x} \\ \vc{y} \end{bmatrix} \leq b_{ij}$$
$$\exists \vc{x} \bigwedge\limits_i \neg \exists \vc{y} : \bigwedge\limits_j \vc{a_{ij}} \cdot \begin{bmatrix}\vc{x} \\ \vc{y} \end{bmatrix} > b_{ij}$$

For any fixed $\hat{x}$ and $i$, we can solve the problem $\exists \vc{y} : \bigwedge\limits_j [\hat{x}^T; \vc{y}^T] \cdot \vc{a_{ij}} > b_{ij}$ in polynomial time. We know that linear programs can be optimized in polynomial time \cite{karmarkar:lp}, and so to derive an answer for our original problem, we solve the linear program $\bigwedge\limits_j [\vc{a_{ij}}^T; -1] \begin{bmatrix} \hat{x} \\ \vc{y} \\ \epsilon \end{bmatrix} \geq b_{ij}$ maximizing $\epsilon$. If no solution exists, then there is no valid $\vc{y}$. If a solution exists with $\epsilon \leq 0$, then there was some constraint for which $[\hat{x}^T; \vc{y}^T] \cdot \vc{a_{ij}} > b_{ij}$ did not hold as there was a non-positive margin required to make all inequalities hold. Thus, only if $\epsilon > 0$, do we say that there exists a $\vc{y}$ satisfying our original constraints.

This immediately implies that we have a routine for verifying a certificate for SC-TCSPU in polynomial time. Given a certificate $\hat{x}$, then for each of the constraints $i$, we run our subroutine for determining whether a $\vc{y}$ exists that satisfies the specified sub-constraints. Since the verification algorithm runs in polynomial time, we know that SC-TCSPU $\in$ NP, and that SC-TCSPU is NP-complete.
\end{proof}

\begin{theorem}
Checking the strong controllability of DTNUs and CDTNUs are $\csigma$-complete.
\end{theorem}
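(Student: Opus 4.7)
The plan is to dispatch hardness via Lemma \ref{sc-dtnu-hard} and then focus membership on a CDTNU-to-DTNU reduction followed by a direct $\csigma$ algorithm for DTNU. Hardness lifts immediately: Lemma \ref{sc-dtnu-hard} gives $\csigma$-hardness of SC-DTNU, and because any DTNU is a CDTNU with empty proposition set $P$, the hardness transfers to SC-CDTNU.

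For membership, I would first reduce SC-CDTNU to SC-DTNU, mirroring the argument in Theorem \ref{thm:cstnu-sc-p}. Encoding SC-CDTNU as
$$\exists \vc{x} \forall \vc{y} \forall \Omega \forall \Psi : \bigwedge_r \bigvee_k (\psi_{r,k} \rightarrow c_{r,k}),$$
I distribute $\forall \Psi$ over the outer conjunction and work on each constraint $r$ in isolation. Rewriting the inner disjunction of implications as $\neg(\bigwedge_k \psi_{r,k}) \vee \bigvee_k c_{r,k}$, only the first disjunct depends on $\Psi$, so the inner $\forall \Psi$ collapses to: either $\bigwedge_k \psi_{r,k}$ is propositionally unsatisfiable, or the unconditional disjunction $\bigvee_k c_{r,k}$ must hold. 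Since each $\psi_{r,k}$ is a conjunction of literals, the conjunction is itself a conjunction of literals whose satisfiability is checkable in polynomial time, and I either drop the constraint or replace it with $\bigvee_k c_{r,k}$, producing an equivalent DTNU.

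Next I would show SC-DTNU $\in \csigma$ by analyzing its encoding
$$\exists \vc{x} \forall \vc{y} \forall \Omega : \bigwedge_i \bigvee_j \vc{a_{ij}} \cdot \begin{bmatrix}\vc{x} \\ \vc{y}\end{bmatrix} \leq b_{ij}.$$
The $\csigma$ algorithm non-deterministically guesses a rational $\vc{x}$ of polynomial bit-length and then queries a coNP oracle to verify the inner universal. Negating that inner condition yields the NP query ``does some $(\Omega, i, \vc{y})$ falsify clause $i$?'': guess $\Omega$ (polynomial bits, since each contingent variable has finitely many disjunct ranges) together with $i$, then apply the $\epsilon$-maximization LP trick from the TCSPU proof to test in polynomial time whether there is $\vc{y}$ in the $\Omega$-box with $\vc{a_{ij}} \cdot \begin{bmatrix}\vc{x} \\ \vc{y}\end{bmatrix} > b_{ij}$ for every $j$ appearing in clause $i$.

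The main obstacle is justifying that the witness $\vc{x}$ can be chosen with polynomial bit-length despite being a priori continuous. I would argue this via standard results on two-alternation quantified linear systems \cite{eirinakis:qli,subramani:qlp}: the set of feasible $\vc{x}$ is a semi-linear set expressible as a finite union of polyhedra whose defining coefficients come from the input together with polynomial-bit Farkas multipliers derived from each inner LP-infeasibility condition (one per choice of $\Omega$ and clause index). Since every non-empty rational polyhedron with polynomial-bit coefficients contains a rational vertex of polynomial bit-complexity, a suitable $\vc{x}$ is always guessable whenever the network is strongly controllable, completing the $\csigma$ upper bound and hence the completeness claim.
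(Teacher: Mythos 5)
Your proposal is correct, and its skeleton matches the paper's: hardness comes from Lemma \ref{sc-dtnu-hard} plus the observation that DTNUs are CDTNUs with $P = \emptyset$, and membership comes from taking the schedule $\hat{x}$ as the $\exists$-certificate and using an NP oracle to decide the negated inner universal. The interesting divergence is in how the two arguments treat the pieces inside the oracle call. The paper keeps the conditions: its oracle guesses $\hat{\Psi}$, $\hat{\Omega}$, and $\hat{y}$ together and verifies the conditional constraints directly, leaning on the paper's standing assumption that any relevant $\vc{y}$ (and $\vc{x}$) admits a polynomial-size representation. You instead compile the conditions away up front by distributing $\forall \Psi$ over the conjunction and collapsing each $\bigvee_k(\psi_{r,k} \rightarrow c_{r,k})$ to either a vacuous constraint or the unconditional disjunction $\bigvee_k c_{r,k}$ --- a legitimate reuse of the Theorem \ref{thm:cstnu-sc-p} trick that yields an equivalent DTNU and, as a side benefit, shows SC-CDTNU and SC-DTNU are polynomial-time interreducible. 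Your oracle query is also more concretely specified (guess $\Omega$ and a clause index $i$, then run the $\epsilon$-maximization LP over $\vc{y}$), which avoids having the oracle guess a continuous $\vc{y}$ at all, and your closing discussion of the bit-length of $\hat{x}$ via vertices of polynomially-described polyhedra supplies a justification that the paper handles only by assumption. The one place to be careful is that last step: the Farkas-multiplier argument is sketched rather than proved, and it is doing real work (without a polynomial bound on the certificate size the $\csigma$ machine cannot even write $\hat{x}$ down), so if you keep it rather than adopting the paper's explicit representability assumption, it deserves a full proof.
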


\begin{proof}
We know that checking the strong controllability of a DTNU is $\csigma$-hard from Lemma \ref{sc-dtnu-hard} and because CDTNUs generalize DTNUs, SC-CDTNU is also $\csigma$-hard. To demonstrate that both are $\csigma$-complete, we show that checking the strong controllability of a CDTNU is in $\csigma$.

To do so, we show that with an NP oracle we can verify that a CDTNU is strongly controllable in polynomial time. We start with an encoding of our problem:
$$\exists \vc{x} \forall \Psi \forall \Omega \forall \vc{y} \in \Omega :
\bigwedge\limits_i \bigvee\limits_j \psi_{ij} \rightarrow \bigwedge\limits_k \vc{a}_{ijk} \cdot \begin{bmatrix} \vc{x} \\ \vc{y} \end{bmatrix} \leq b_{ijk}$$
and we let our certificate be the assignment of values to all executable timepoints, $\hat{x}$. Given this certificate, an NP-oracle is capable of evaluating:
$$\exists \Psi \exists \Omega \exists \vc{y} \in \Omega : \neg \bigwedge\limits_i \bigvee\limits_j \psi_{ij} \rightarrow \bigwedge\limits_k \vc{a}_{ijk} \cdot \begin{bmatrix} \hat{x} \\ \vc{y} \end{bmatrix} \leq b_{ijk}$$
We can see this simply, as when we provide a certificate comprised of $\hat{\Psi}, \hat{\Omega}, \hat{y}$, it takes linear time to verify whether the conditional constraints are all satisfied.

Thus, when given a candidate assignment $\hat{x}$, we can use an NP-oracle to evaluate the negation of the remainder of the formula. If the negation has no solution, then we know that the original formula is true, and we have a way to verify SC-CDTNU in polynomial time. Thus, SC-CDTNU $\in \csigma$, so SC-DTNU and SC-CDTNU are $\csigma$-complete.
\end{proof}

\subsubsection{Weak Controllability}

Next, we move on to evaluating the complexity of weak controllability in temporal networks.

\begin{theorem}
Checking the weak controllability of CSTNUs is coNP-complete.
\end{theorem}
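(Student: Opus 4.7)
The plan is to handle the two bounds separately. For coNP-hardness, I would appeal to the fact, cited in Section 2, that weak consistency checking for CSTNs is coNP-complete \cite{tsamardinos:cstn}: a CSTN is the special case of a CSTNU with $X_c = R_c = \emptyset$, and on this restriction WC-CSTNU coincides with CSTN weak consistency, so WC-CSTNU inherits coNP-hardness. The substantive work is the matching upper bound, which I would establish by showing that the complement lies in NP, using the certificate-verification template developed throughout this subsection.

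Concretely, I would encode WC-CSTNU in the canonical form as
$$\forall \Psi \; \forall \vc{y} \; \exists \vc{x} : \bigwedge_i \left( \psi_i \rightarrow \vc{a}_i \cdot \begin{bmatrix} \vc{x} \\ \vc{y} \end{bmatrix} \leq b_i \right),$$
where each $y_i$ ranges over the single interval $[l_{c_i}, u_{c_i}]$ of its (non-disjunctive) contingent constraint and $\Psi$ ranges over boolean assignments to $P$. Negating gives
$$\exists \Psi \; \exists \vc{y} \; \neg \exists \vc{x} : \bigwedge_i \left( \psi_i \rightarrow \vc{a}_i \cdot \begin{bmatrix} \vc{x} \\ \vc{y} \end{bmatrix} \leq b_i \right),$$
so a certificate of non-weak-controllability is a pair $(\hat{\Psi}, \hat{y})$, polynomially sized under the paper's standing assumption on representable schedule values, and the verifier must confirm that no $\vc{x}$ satisfies the induced constraints.

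Given $(\hat{\Psi}, \hat{y})$, each labeled constraint is either discarded (when $\psi_i$ evaluates to false under $\hat{\Psi}$) or collapses to a plain linear inequality in $\vc{x}$ after substituting the numerical values of $\hat{y}$ into the terms $y_i + x_j$ introduced by the framework's contingent substitution. What remains is exactly an STN over the executable timepoints, whose infeasibility can be detected in $O(mn)$ time by searching for a negative cycle in its distance graph \cite{dechter:temporal}. Hence non-weak-controllability admits a polynomial-time verifier, lies in NP, and WC-CSTNU is in coNP; combined with the hardness side this yields coNP-completeness. The only delicate point, as in the earlier proofs, is ensuring that the real-valued contingent-duration certificate $\hat{y}$ has a polynomial-size representation, which is already covered by the framework's standing encoding assumption, so no further work is required to close the argument.
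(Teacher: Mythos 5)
Your proposal is correct and follows essentially the same route as the paper: inherit coNP-hardness from a special case, then show the complement is in NP by taking $(\hat{\Psi},\hat{y})$ as a certificate and checking infeasibility of the residual linear system over $\vc{x}$ in polynomial time. The only (inconsequential) differences are that the paper derives hardness from STNU weak controllability \cite{morris:waypoint} rather than CSTN weak consistency, and verifies the residual system with a general LP solver \cite{karmarkar:lp} rather than observing, as you do, that it collapses to an STN amenable to $O(mn)$ negative-cycle detection.
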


\begin{proof}
Checking the weak controllability of STNUs is coNP-complete \cite{morris:waypoint}, so similarly checking the weak controllability of CSTNUs must be coNP-hard. To demonstrate that WC-CSTNU is coNP-complete, we must show that WC-CSTNU $\in$ coNP. We see this clearly when we look at the quantified linear system we get when evaluating a CSTNU's weak controllability:
$$\forall \Psi \forall \vc{y} \exists \vc{x} : \bigwedge\limits_i \psi_i \rightarrow \vc{a}_i \cdot \begin{bmatrix} \vc{x} \\ \vc{y} \end{bmatrix} \leq b_i$$

To show that WC-CSTNU is in coNP, we show that its complement problem is in NP, or that we can verify the following formula in polynomial time:
$$\exists \Psi \exists \vc{y} \neg \exists \vc{x} : \bigwedge\limits_i \psi_i \rightarrow \vc{a}_i \cdot \begin{bmatrix} \vc{x} \\ \vc{y} \end{bmatrix} \leq b_i$$
In this instance, we take as our certificate a particular choice of $\hat{\Psi}$ and $\hat{y}$. We can verify these values directly:
$$\neg \exists \vc{x} : \bigwedge\limits_{i : \hat{\Psi} \vDash \psi_i} \psi_i \rightarrow \vc{a}_i \cdot \begin{bmatrix} \vc{x} \\ \hat{y} \end{bmatrix} \leq b_i$$
$$\neg \exists \vc{x} : \bigwedge\limits_{i : \hat{\Psi} \vDash \psi_i} \vc{a}_i \cdot \begin{bmatrix} \vc{x} \\ \hat{y} \end{bmatrix} \leq b_i$$
Of course, we can evaluate all linear inequalities simultaneously through the evaluation of a linear program:
$$\neg \exists \vc{x} : A_{\hat{\Psi}}\begin{bmatrix} \vc{x} \\ \hat{y} \end{bmatrix} \leq \vc{b}_{\hat{\Psi}}$$
Since linear programs can be evaluated in polynomial time \cite{karmarkar:lp}, we can verify the complement of WC-CSTNU in polynomial time, meaning that WC-CSTNU $\in$ coNP and is coNP-complete.
\end{proof}

\begin{theorem}
Checking the weak controllability of TCSPUs, DTNUs, and CDTNUs are $\cpi$-complete.
\end{theorem}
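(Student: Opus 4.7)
The plan is to exploit the generalization hierarchy and only prove the upper bound for the most expressive of the three networks. Since every TCSPU is a DTNU (simple disjunctions being a degenerate case of full disjunctions) and every DTNU is a CDTNU (take $P = \emptyset$ and trivial labels), Lemma \ref{wc-tcspu-hard} immediately gives $\cpi$-hardness for all three problems with no additional construction.

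For the membership half I would follow the template of the WC-CSTNU proof, but with one extra layer of nondeterminism to absorb the disjunctions. Encode WC-CDTNU in the quantified conditional-linear format from the section preamble:
\[
\forall \Psi \, \forall \Omega \, \forall \vc{y} \in \Omega \, \exists \vc{x} : \bigwedge_i \bigvee_j \left( \psi_{ij} \rightarrow \bigwedge_k \vc{a}_{ijk} \cdot \begin{bmatrix}\vc{x} \\ \vc{y}\end{bmatrix} \leq b_{ijk} \right).
\]
To place this in $\cpi = \text{coNP}^{\text{NP}}$, I would show the complement lies in $\csigma = \text{NP}^{\text{NP}}$. A $\csigma$ machine nondeterministically guesses a certificate $(\hat\Psi, \hat\Omega, \hat y)$---polynomial in size by the representation convention established earlier---and then issues a single NP oracle call to decide whether \emph{no} $\vc{x}$ satisfies the residual system, returning the oracle's negated answer.

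The key step, and the main obstacle, is verifying that this inner existential question is itself in NP. After substituting $\hat\Psi$, each conditional disjunct $\psi_{ij} \rightarrow (\cdot)$ either becomes trivially satisfied (antecedent false, so the enclosing clause can be discharged) or collapses to a plain conjunction of linear inequalities. An NP algorithm can then guess, for each remaining clause, which disjunct $j$ to enforce, substitute $\hat y$, and test feasibility of the resulting linear program in polynomial time \cite{karmarkar:lp}. Because the existential is in NP, its complement is a single NP oracle query, so the outer $\text{NP}^{\text{NP}}$ machine succeeds.

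This gives WC-CDTNU $\in \cpi$, and combined with the hardness chain from Lemma \ref{wc-tcspu-hard}, delivers $\cpi$-completeness for WC-TCSPU, WC-DTNU, and WC-CDTNU simultaneously. The only subtlety worth flagging is that $\Omega$ is vacuous for TCSPUs but nontrivial for DTNUs/CDTNUs; the upper bound accommodates both uniformly since $\Omega$ admits a polynomial-bit representation in every case, and the hardness for TCSPUs forces the tight classification even where $\Omega$ contributes nothing.
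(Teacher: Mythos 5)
Your proposal is correct and follows essentially the same route as the paper: hardness via Lemma \ref{wc-tcspu-hard} plus the generalization hierarchy, and membership by placing the complement of WC-CDTNU in $\csigma$ using a guessed certificate $(\hat\Psi, \hat\Omega, \hat y)$ and a single NP oracle call for the residual existential over $\vc{x}$. The only cosmetic difference is that you certify the inner NP problem by guessing which disjunct to enforce and solving a linear program, whereas the paper certifies it by checking a guessed $\hat x$ directly against the formula; both are valid.
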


\begin{proof}

By Lemma \ref{wc-tcspu-hard}, we know that computing the weak controllability of TCSPUs are $\cpi$-hard, meaning computing WC-DTNU and WC-CDTNU are both also $\cpi$-hard. To show that all three are $\cpi$-complete, we must show that WC-CDTNU $\in \cpi$. We start with the quantified formula representation of weak controllability in a CDTNU:
$$\forall \Psi \forall \Omega \forall \vc{y} \in \Omega \exists \vc{x} : \bigwedge\limits_i \bigvee\limits_j \psi_{ij} \rightarrow \left( \bigwedge\limits_k \vc{a}_{ijk} \cdot \begin{bmatrix} \vc{x} \\ \vc{y} \end{bmatrix} \leq b_{ijk} \right)$$
For our purposes, it will be useful to show that the complementary problem is in $\csigma$:
$$\exists \Psi \exists \Omega \exists \vc{y} \neg \exists \vc{x} : \bigwedge\limits_i \bigvee\limits_j \psi_{ij} \rightarrow \left( \bigwedge\limits_k \vc{a}_{ijk} \cdot \begin{bmatrix} \vc{x} \\ \vc{y} \end{bmatrix} \leq b_{ijk} \right)$$
To prove that solving the above formula is in $\csigma$, we show that with an NP-oracle, we can construct a verification algorithm that runs in polynomial time. Our verifier will take in the certificate composed of $\hat{\Psi}, \hat{\Omega}, \hat{y}$, leaving the subproblem:
$$\neg \exists \vc{x} : \bigwedge\limits_i \bigvee\limits_{j: \hat{\Psi} \vDash \psi_{ij}} \psi_{ij} \rightarrow \left( \bigwedge\limits_k \vc{a}_{ijk} \cdot \begin{bmatrix} \vc{x} \\ \hat{y} \end{bmatrix} \leq b_{ijk} \right)$$
$$\neg \exists \vc{x} : \bigwedge\limits_i \bigvee\limits_{j: \hat{\Psi} \vDash \psi_{ij}} \bigwedge\limits_k \vc{a}_{ijk} \cdot \begin{bmatrix} \vc{x} \\ \hat{y} \end{bmatrix} \leq b_{ijk}$$
The unnegated version of this problem is clearly in NP. Given a certificate $\hat{x}$, we can verify whether or not the formula holds in linear time. As a result, with an NP-oracle, we can solve the presented subproblem, meaning that our complement problem is in $\csigma$ and our original problem is thus in $\cpi$. This proves that WC-TCSPU, WC-DTNU, and WC-CDTNU are $\cpi$-complete.
\end{proof}

\subsubsection{Dynamic Controllability}

Finally, we show that checking dynamic controllability for any temporal network with uncertainty and either disjunctions or conditional constraints is PSPACE-complete.

\begin{theorem}
Checking the dynamic controllability of CSTNUs, TCSPUs, DTNUs, and CDTNUs are PSPACE-complete.
\end{theorem}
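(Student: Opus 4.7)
The plan is to dispatch the hardness and membership halves separately, with membership carrying the substantive content. For the lower bound, DC-CSTN is already known to be PSPACE-complete \cite{cairo:cstn-pspace-complete} and a CSTN is a special case of a CSTNU, so DC-CSTNU is PSPACE-hard; Lemma \ref{dc-tcspu-hard} gives PSPACE-hardness for DC-TCSPU; and DC-DTNU and DC-CDTNU inherit hardness by generalization. Since CDTNU subsumes the other three networks, the membership half then reduces to showing DC-CDTNU $\in$ PSPACE.

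For the upper bound, I would view dynamic execution as a two-player alternating game played in rounds indexed by ``significant'' moments in time. At each round the scheduler either commits to a next executable timepoint value or elects to wait for the next firing event, and nature responds by revealing any contingent duration or observed proposition whose trigger has elapsed. Because there are only polynomially many executable timepoints, contingent timepoints, and observation events, the game has depth $O(|X_e| + |X_c| + |P|)$, and by the polynomial-bit representation assumption stated in the preamble to this section each move can be described using polynomially many bits. At a leaf, all variables in the quantified system
$$\bigwedge_i \bigvee_j \psi_{ij} \rightarrow \left( \bigwedge_k \vc{a}_{ijk} \cdot \begin{bmatrix} \vc{x} \\ \vc{y} \end{bmatrix} \leq b_{ijk} \right)$$
are instantiated, leaving a conjunction of labeled disjunctive linear inequalities that can be checked in polynomial time. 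A depth-first recursive evaluation of this alternating game tree --- storing only the current root-to-leaf path together with the partial assignment and observation history --- fits in polynomial space, giving DC-CDTNU $\in$ PSPACE and hence the same containment for the three simpler problems.

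The main obstacle I expect is justifying the depth and branching bounds rigorously. The subtlety is that dynamic controllability permits the scheduler to wait an arbitrary real-valued amount between firing events, so the scheduler's choice of \emph{when} to act, not just \emph{what} value to commit, must admit a polynomial-bit representation. I would address this by arguing that between consecutive firing events there are only polynomially many semantically distinct instants the scheduler must distinguish --- those induced by the finitely many constraint endpoints and already-scheduled timepoints --- so without loss of generality each candidate execution time can be restricted to a polynomial-bit domain. Once this discretization is in place, the identification of alternating polynomial time with PSPACE closes the upper-bound argument and, combined with the hardness observations above, establishes PSPACE-completeness for all four problems.
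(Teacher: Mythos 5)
Your proposal is correct and follows essentially the same route as the paper: hardness is inherited from DC-CSTN and Lemma \ref{dc-tcspu-hard} exactly as you describe, and membership is established by a depth-first, polynomial-space exploration of the scheduler-versus-nature game tree, whose depth is bounded by the number of timepoints and whose time-valued moves are discretized via the fixed polynomial-bit representation assumption stated in the preamble (the paper enumerates all representable instants in $[\tau, G.tMax]$ rather than arguing that only polynomially many instants are semantically distinct, but both rest on the same representation assumption). The paper packages this as an explicit recursive algorithm with interruptible scheduling actions rather than invoking the alternating-game characterization of PSPACE, which is a presentational rather than substantive difference.
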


\begin{proof}
We know from Lemma \ref{dc-tcspu-hard} that DC-TCSPU is PSPACE-hard, meaning that checking the dynamic controllability of DTNUs and CDTNUs must also be PSPACE-hard. Similarly because checking the dynamic controllability of CSTNs is PSPACE-hard \cite{cairo:cstn-pspace-complete}, DC-CSTNU must also be PSPACE-hard. In order to show that determining dynamic controllability for any of these four networks in PSPACE-complete, we provide an algorithm for checking the dynamic controllability of CDTNUs which requires polynomial space (see Algorithm \ref{alg:cdtnu-dc-pspace}).

\begin{algorithm}[!t]
\SetAlgoLined
\SetKwInput{Input}{Input}
\SetKwInput{Output}{Output}
\SetKwInput{Algorithm}{\textsc{CheckDC}}
\SetKwInput{Initialize}{Initialization}
\SetKwIF{If}{ElseIf}{Else}{if}{then}{else if}{else}{endif}
\Indm
\Input{A list of timepoints with assigned values, $T$;\\A list of active contingent links, $A$;\\A set of yet-to-be-executed timepoints $E$;\\The input CDTNU $G$;\\The current time, $\tau$}
\Output{Whether the CDTNU is dynamically controllable.}
\Algorithm{}
\Indp
\If{$E.empty()$} {
    \For{$realization \in A.realizationsFrom(\tau)$} {
        \If{$!G.isConsistent(T.extend(realization))$} {
            \Return{$false$}\;
        }
    }
    \Return{$true$}\;
}

\For{$t \in E$} {
    \For{$\tau' \in [\tau, G.tMax]$} {
        $allSatisfied \leftarrow true$\;
        \For{$realization \in A.realizationsFrom(\tau)$} {
            $earliest \leftarrow realization.earliest()$\;
            \If{$early.time \leq \tau'$} {
                \If{!\textsc{CheckDC}($T \cup \{earliest\},$\\$A.nextContingents(earliest),$\\$E,$\\$G,$\\$earliest.time$)} {
                    $allSatisfied \leftarrow false$\;
                    break\;
                }
            } \Else {
                \If{!\textsc{CheckDC}($T \cup \{\textsc{TimepointAssignment}(t, \tau')\},$\\$A.nextContingents(\textsc{TimepointAssignment}(t, \tau')),$\\$E \setminus t,$\\$G,$\\$\tau'$)} {
                    $allSatisfied \leftarrow false$\;
                    break\;
                }
            }
        }
        \If{$allSatisfied$} {
            \Return $true$\;
        }
    }
}
\Return{$false$}\;
\caption{PSPACE algorithm for checking DC-CDTNU.}
\label{alg:cdtnu-dc-pspace}
\end{algorithm}

Before we explain the details of the algorithm, we need to extend some concepts to describe a partially executed CDTNU, as our algorithm for determining dynamic controllability will recursively act on partially executed networks. We say that timepoints are \textit{scheduled} if they have been assigned a specific value, whether by the scheduler or by nature. We say that a contingent link is \textit{active} if its starting timepoint has been scheduled but its ending timepoint has not. Finally, given a group of active contingent links, we say that the set of all \textit{realizations} from some time $\tau$ is the set of all possible times at which the contingent links could end with none of them ending sooner than $\tau$. We can now move on to describing the function of the algorithm before demonstrating that it uses at most polynomial space.

The algorithm works by recursively simulating all possible strategies used by an agent in a dynamically controllable setting. As input, it takes in a list of timepoints whose values have already been scheduled (either by the scheduler or by nature), a list of active contingent links, a list of unexecuted timepoints, the CDTNU, and the current time. While there are still executable timepoints that need to be scheduled, the algorithm searches for one that guarantees a valid dynamically controllable strategy.

In the context of dynamic controllability, an agent only has one of two possible actions: they can either unconditionally schedule an action or schedule an action to occur so long as no other contingent timepoint occurs in the interim. We model this behavior by modeling all scheduling actions as interruptible by contingent timepoints. In other words, if a contingent timepoints occurs before an event we unconditionally scheduled, we still give the agent the choice to adapt their strategy. In the case of an unconditionally scheduled action, they would just reaffirm their previous choice.

To model all strategies, we iterate over all possible timepoints that can be scheduled (line 6) and all possible times at which they can be scheduled (line 7). If at least one scheduling of a timepoint given the input parameters is valid, then we know that the CDTNU is dynamically controllable. When there are no active contingent links that might be scheduled before the timepoint that we chose to schedule, we can recurse on that assignment to get an answer (line 20-26), but in the case that there are contingent links that may occur earlier, we have to respond to them in turn (lines 9-18). If all possible realizations of contingent link values still guarantee that the CDTNU is consistent, then we know that the system is dynamically controllable.

Now, we show that the algorithm uses at most polynomial space. If we have no more timepoints to schedule, then we stay in lines 1-5 of the algorithm, which checks consistency over all possible realizations of the remaining contingent links. Checking consistency is a polynomial time operation, as it just requires iterating through each constraint and verifying that it is satisfied. While we have to do this for exponentially many realizations, we do not need to remember each particular realization; we merely need to remember the current realization and know how to increment to the next one. As a result, handling all realizations also takes polynomial space.

In the event that we do have executable timepoints to schedule, we operate over lines 6-29. Iterating over each timepoint at line 6 takes polynomial space, and when we iterate over all $\tau'$ at line 7, we have a finite but exponentially large number of values to choose from but only need polynomially many bits to represent that choice. At line 9, we handle realizations in the same way as we did at line 2, meaning we only need polynomial space, and then we have the remaining recursive calls. If we look at the number of possible stack frames, we see that every time we recursively call \textsc{CheckDC}, we add a new timepoint to $T$, correspondingly removing it from either $A$ (line 13) or $E$ (line 22). The set $E$ never grows, and because every contingent link's ending timepoint is unique, we will never add the same timepoint to $A$ twice. This means that after at most $|X_e \cup X_b|$ recursive calls, we will eventually reach a state where $E$ is empty, and our recursive calls will terminate. This preserves our guarantee that we use polynomial space, meaning that Algorithm \ref{alg:cdtnu-dc-pspace} decides DC-CDTNU and is in PSPACE. Thus, deciding dynamic controllability for CSTNUs, TCSPUs, DTNUs, and CDTNUs are all PSPACE-complete.
\end{proof}

\section{Discussion and Conclusions}

Our work provides novel complexity results that are much tighter than existing bounds and require at most polynomial space for strong, weak, and dynamic controllability of several distinct types of temporal networks; this work is summarized in Figure \ref{fig:taxonomy}. Beyond the contribution of the relevant proofs, the value of these results is that it gives modelers insight into which types of features have a significant impact on the runtime complexity of a problem. Many of these insights are not immediately obvious, and in the remainder of this section we discuss a few of them.

First we consider CSTNUs. CSTNUs are a generalization of CSTNs and STNUs and share much in common with their predecessors. In particular, strong controllability of CSTNUs, being in P, can be computed quite efficiently. Our proof for Theorem \ref{thm:cstnu-sc-p} actually proves a stronger result that a CSTNU is strongly controllable if and only if the corresponding STNU derived by making all constraints unconditional is strongly controllable. This implies that strong controllability of CSTNUs can be computed in $O(mn)$ time, which is as fast as it takes to compute the feasibility of a simple STN. When we turn to weak and dynamic controllability, we see that checking the controllability of a CSTNU is in the same class as checking controllability of a CSTN. From the perspective of the modeler, this implies that there is a surprisingly low cost to adding uncertainty to a temporal constraint model that already uses conditional constraints.

While CSTNU controllability checking matches the complexity of CSTN controllability checking, it only matches the controllability checking complexity of strong and weak controllability for STNUs. In fact, dynamic controllability checking across all types of networks, with the exception of STNUs, is PSPACE-complete. In scheduling problems, modelers must often make the trade-off between using strong controllability, which is often easier to compute, and dynamic controllability, which gives more flexibility during execution but is more expensive. In instances where dynamic controllability is deemed necessary, there is a significant advantage to relaxing the underlying temporal model, eliminating conditional and disjunctive constraints, to use an STNU. It is still quite surprising that despite the fact that STNU dynamic controllability can be determined in polynomial time, every other modification makes computing dynamic controllability PSPACE-complete.

A final area worth discussing is the effect of temporal disjunctions. The two temporal network models that use disjunctions without temporal uncertainty are TCSPs and DTNs; TCSPs have simple disjunctions, only requiring disjunctions over a single link, while DTNs have full disjunctions, allowing disjunctions to span multiple links. Since determining feasibility for both network structures is NP-complete, intuition would suggest that after adding uncertainty the complexity of checking controllability for TCSPUs and DTNUs would also be the same. While this is the case for weak and dynamic controllability, we do see a difference in strong controllability, meaning that strong controllability is easier to compute in TCSPUs than it is in DTNUs, assuming NP $\neq \csigma$, implying there is a meaningful difference between the two types of disjunctions.

As we look forward, there are still many areas worthy of future research efforts. One, in particular, is the development of novel algorithms for determining the controllability of these networks. Our work establishes bounds on the complexity of computing controllability but does minimal work to provide algorithms for doing so. In practice, our proofs admit the trivial polynomial-space strategy of recursive enumeration of certificates but these algorithms are likely impractical. Our new theoretical bounds open up the challenge of finding novel algorithms that are reasonable for practical use while still respecting polynomial time bounds.

\section*{Acknowledgments}

\noindent This research was funded in part by the Toyota Research Institute under grant number LP-C000765-SR.

\section*{Bibliography}

\bibliography{aaai.bib}
\bibliographystyle{aaai}

\end{document}